\documentclass{article}

\usepackage{microtype}
\usepackage{graphicx}
\usepackage{subcaption}
\usepackage{booktabs} 
\usepackage{url}
\usepackage{hyperref}
\usepackage{multirow}

\usepackage[preprint]{icml2026}

\usepackage{amsmath}
\usepackage{amssymb}
\usepackage{mathtools}
\usepackage{amsthm}
\usepackage{tcolorbox}

\usepackage[capitalize,noabbrev]{cleveref}

\theoremstyle{plain}
\newtheorem{theorem}{Theorem}

\newtheorem{lemma}{Lemma}

\theoremstyle{definition}
\newtheorem{definition}{Definition}

\theoremstyle{remark}

\usepackage[textsize=tiny]{todonotes}

\icmltitlerunning{Submission and Formatting Instructions for ICML 2026}

\begin{document}

\twocolumn[
  \icmltitle{ResRL: Boosting LLM Reasoning via Negative Sample Projection Residual Reinforcement Learning}

  \icmlsetsymbol{equal}{*}
  \icmlsetsymbol{intern}{\ensuremath{\dagger}}
    
  \begin{icmlauthorlist}
    \icmlauthor{Zihan Lin}{equal,intern,ias,mt,ucas}
    \icmlauthor{Xiaohan Wang}{equal,mt}
    \icmlauthor{Jie Cao}{ias}
    \icmlauthor{Jiajun Chai}{mt}
     \icmlauthor{Li Wang}{mt}
      \icmlauthor{Xiaodong Lu}{mt}
    \icmlauthor{Wei Lin}{mt} \\
    \icmlauthor{Ran He}{ias,ucas}
    \icmlauthor{Guojun Yin}{mt}
  \end{icmlauthorlist}
    
  \icmlaffiliation{ias}{MAIS\&NLPR, Institute of Automation, Chinese Academy of Sciences, Beijing, China}
  \icmlaffiliation{mt}{Meituan, Beijing, China}
  \icmlaffiliation{ucas}{School of Advanced Interdisciplinary Sciences, University of Chinese Academy of Sciences, Beijing, China}

  \icmlcorrespondingauthor{Xiaohan Wang}{wangxiaohan17@meituan.com}
  \icmlcorrespondingauthor{Guojun Yin}{yinguojun02@meituan.com}
  \icmlcorrespondingauthor{Ran He}{ran.he@ia.ac.cn}
    
  \icmlkeywords{Machine Learning, ICML}

  \vskip 0.3in
]

\printAffiliationsAndNotice{
  \icmlEqualContribution
  \quad
  \textsuperscript{\ensuremath{\dagger}}Work was done during an internship at Meituan.
}

\begin{abstract}
Reinforcement Learning with Verifiable Rewards (RLVR) enhances reasoning of Large Language Models (LLMs) but usually exhibits limited generation diversity due to the over-incentivization of positive rewards. Although methods like Negative Sample Reinforcement (NSR) mitigate this issue by upweighting penalty from negative samples, they may suppress the semantic distributions shared between positive and negative responses. To boost reasoning ability without losing diversity, this paper proposes negative sample projection Residual Reinforcement Learning (ResRL) that decouples similar semantic distributions among positive and negative responses. We theoretically link Lazy Likelihood Displacement (LLD) to negative-positive head-gradient interference and derive a single-forward proxy that upper-bounds representation alignment to guide conservative advantage reweighting. 
ResRL then projects negative-token hidden representations onto an SVD-based low-rank positive subspace and uses projection residuals to modulate negative gradients, improving reasoning while preserving diversity and outperforming strong baselines on average across twelve benchmarks spanning Mathematics, Code, Agent Tasks, and Function Calling. Notably, ResRL surpasses NSR on mathematical reasoning by 9.4\% in Avg@16 and 7.0\% in Pass@128. Code is available at \url{https://github.com/1229095296/ResRL.git}.
\end{abstract}
\section{Introduction}
\label{intro}

Reinforcement Learning with Verifiable Rewards (RLVR) has emerged as a prominent post-training paradigm for enhancing the reasoning capabilities of Large Language Models (LLMs) \citep{shao2025spurious}. Notably, DeepSeek-R1 has demonstrated that RLVR can yield significant performance improvements in complex scenarios, introducing the widely adopted Group-Relative Policy Optimization (GRPO) \citep{guo2025deepseek}. However, recent studies indicate that while RLVR effectively optimizes targeted metrics and increases the likelihood of generating high-reward responses, it significantly reduce the base model's output diversity, potentially leading to mode collapse during training \citep{simoni2025gtpo}. Concretely, improvements in Pass@1 accuracy may come at the expense of Pass@$k$ performance; this trade-off may hinder exploration and limit generalization on out-of-distribution tasks \citep{zhu2025flowrl,deng2025token,zeng2024token}.

To enhance generation diversity and improve Pass@k performance of RLVR, Negative Sample Reinforcement (NSR) has offered an alternative view of policy optimization by explicitly differentiating between positive (high-reward) and negative (low-reward) responses \citep{zhu2025nsr}. NSR shifts the optimization paradigm from mainly encouraging the generation of positive responses to actively suppressing negative ones. This approach enables RLVR to enhance model performance (Pass@1) while preserving output diversity (Pass@$k$). However, NSR primarily achieves this by upweighting the gradients of negative responses. We posit that indiscriminately suppressing negative responses may introduce a critical side effect: gradient conflict resulting from the semantic overlap between positive and negative distributions. As highlighted in recent studies on Lazy Likelihood Displacement (LLD) \citep{deng2025on,deng2025grpo} and trajectory conflicts \citep{simoni2025gtpo}, positive and negative responses often share substantial token distributions, ranging from syntactic structures to partial reasoning steps. When NSR or standard GRPO penalizes a negative trajectory, it inadvertently decrease the likelihood of shared token distributions that also occur in positive trajectories. In contrast to vanilla GRPO, this effect is amplified in NSR due to its increased negative weighting. Consequently, while NSR effectively improves Pass@$k$, it may demonstrate limited efficacy in boosting Pass@1.

This motivates a central question: \textbf{How can we disentangle the policy optimization of positive and negative responses to selectively suppress errors without penalizing the valid semantic distributions shared with correct trajectories?}

\begin{figure}[t]
    \centering
    \includegraphics[width=1.0\linewidth]{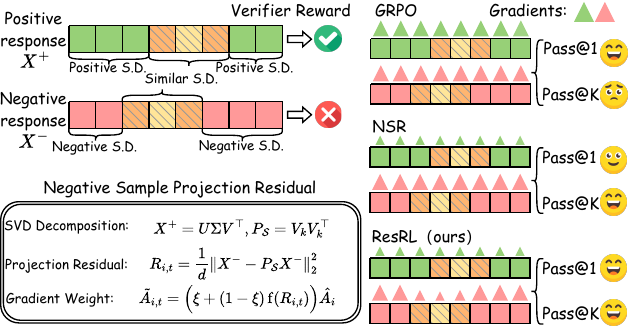}
\caption{\textbf{ResRL overview.} Overlapping positive/negative semantic distributions (S.D.) can cause GRPO/NSR to penalize shared valid tokens. ResRL utilizes negative projection residuals $R_{i,t}$ to reweight gradients, reducing shared semantic penalties.}
    \label{fig:demo_fig_hand}
    \vspace{-0.5 em}
\end{figure}

To this end, we propose ResRL to decouple the gradient updates on the overlapping regions of distributions between positive and negative responses. As shown in Figure~\ref{fig:demo_fig_hand}, our key insight is that penalties applied to negative samples should be confined to the gradient directions orthogonal to the representations of positive samples. To operationalize this, we leverage the hidden states of the policy model as a proxy for the semantic distribution \citep{zhao2025learning, xin2025surrogate}. Subsequently, we identify and selectively suppress the orthogonal complement of the negative sample's representation relative to the subspace spanned by positive ones. This mechanism ensures that shared semantic components remain preserved, while unique, erroneous reasoning patterns are targeted for suppression. To ensure computational feasibility and robustness against variations in generation length, we employ low-rank approximation to construct representation space. Extensive experiments on twelve benchmarks demonstrate that ResRL achieves state-of-the-art (SOTA) performance regarding Avg@16 (the average of 16 independent Pass@1) and Pass@128, surpassing strong baselines such as GRPO and NSR. The main contributions are summarized as follows:

\begin{itemize}
\item \textbf{Theoretical Framework for Gradient Decoupling:} We establish a theoretical connection between LLD and negative-positive gradient interference in NSR, proving that the inner product of output head gradients explicitly decomposes into logit and representation components. Building on this decomposition, we propose a single-forward proxy metric and theoretically demonstrate it serves as a monotonic upper bound on representation alignment, guiding advantage reweighting to impose a conservative bound on head-gradient interference that mitigates the deleterious effects of LLD.

\item \textbf{Methodological Innovation:} We present {ResRL}, a novel RLVR framework incorporating a semantic decoupling mechanism that leverages policy hidden states to characterize token-level response representations. By computing the residual of the negative sample's distribution after projecting onto the positive subspace, we dynamically modulate the gradient penalty during policy optimization. Furthermore, we mitigate computational overhead via a sampling-based low-rank decomposition of the positive representation matrix, complemented by a length-scaled reward mechanism that serves as a safeguard against verbosity to ensure efficient generation.

\item \textbf{Empirical Performance:} We evaluate ResRL on twelve benchmarks spanning Mathematical reasoning, Code generation, Agent Tasks, and Function Calling. ResRL achieves simultaneous gains in Avg@16 and Pass@128, consistently outperforming strong baselines. On mathematics, it improves over the diversity-oriented NSR baseline by {9.4\%} Avg@16 on Qwen3-4B, and by {7.0\%} on average Pass@128. In code generation, ResRL sets a new state of the art on CodeForces, improving over NSR by {9.6\%} in rating. For agent tasks it outperforms EMPG on ALFWorld by {10.4\%} in success rate, and for function call, it exceeds ResT on multi-turn tool-use with a {2.8\%} gain in accuracy. Comprehensive ablation studies on factors such as rank selection, hidden layer choice, and quantile thresholds confirm that the proposed modules are synergistic and indispensable for enhancing performance.
\end{itemize}

\section{Related Work}
\label{related work}

In recent years, RLVR has emerged as a dominant paradigm for eliciting reasoning capabilities of LLMs \citep{guo2025deepseek}. However, debate persists regarding whether it genuinely instills novel reasoning skills or merely refines the retrieval of pre-existing patterns \citep{yue2025does,deng2025trial}, often risking convergence toward spurious rewards \citep{shao2025spurious}. To mitigate the propensity of RLVR to prematurely narrow the search space \citep{deng2025trial}, recent studies have introduced enhanced exploration mechanisms, ranging from Monte Carlo Tree Search (MCTS) \citep{wu2025deepsearch} to adaptive Pass@$k$ objectives \citep{chen2025pass,yang2025depth}. While some approaches derive closed-form gradients for Pass@$k$ \citep{walder2025pass} or employ differentiable top-1 approximations \citep{peng2025simko}, others caution that optimizing such metrics directly may induce mode collapse \citep{yu2025pass}. Concurrently, researchers seek to refine supervision by augmenting sparse verifiers with intrinsic signals, leveraging structural proxies \citep{xin2025surrogate}, probability divergence \citep{zhao2025learning}, uncertainty estimates \citep{wang2025harnessing}, or hidden state distributions \citep{zhu2025flowrl,deng2025token} to guide exploration in RLVR training.

Despite these advances, a critical bottleneck persists in the policy optimization: Conflicting gradients arising from semantically similar tokens across positive and negative samples \citep{simoni2025gtpo}. This conflict frequently precipitates training instability, most notably manifesting as the LLD \citep{deng2025on,deng2025grpo}. Although methods such as negative upweighting \citep{zhu2025nsr} and token-level loss balancing \citep{zeng2024token} provide partial mitigation, they fail to explicitly disentangle the semantic distribution overlap between positive and negative responses. This limitation restricts their potential to robustly improve reasoning capabilities. Moreover, the strategy of utilizing projection residuals to decouple the similar semantic distribution remains unexplored, presenting an open challenge for effectively boosting both Pass@1 and Pass@$k$ metrics.

\section{Method}
\label{sec:method}

\subsection{Theoretical Framework}
\label{sec:theory_proxy}

\paragraph{Preliminaries.}
Given a prompt $c$, the policy $\pi_\theta$ samples a group of $G$ trajectories $\mathcal{G}=\{y_1,\ldots,y_G\}$, where trajectory $i$ has tokens $y_{i,t}$ indexed by the time step $t\in\{1,\ldots,T_i\}$. A verifier assigns a binary trajectory-level reward $r_i\in\{0,1\}$. GRPO optimizes the clipped policy-gradient objective with group-normalized advantages:
\begin{align}
\label{eq:grpo}
\mathcal{L}_{\text{GRPO}}(\theta)
&= \mathbb{E}_{c,\{y_i\}_{i=1}^G}\Bigg[
\frac{1}{G}\sum_{i=1}^G \frac{1}{T_i}\sum_{t=1}^{T_i}
\min\Big(
\rho_{i,t}\hat A_i,\nonumber\\
&\qquad\qquad\qquad
\text{clip}(\rho_{i,t},1-\epsilon,1+\epsilon)\hat A_i
\Big)
\Bigg],
\end{align}
where $\rho_{i,t}=\frac{\pi_\theta(y_{i,t}\mid c,y_{i,<t})}{\pi_{\theta_{\text{old}}}(y_{i,t}\mid c,y_{i,<t})}$ is the importance sampling ratio, and $\epsilon$ is the clipping coefficient. The advantage $\hat A_i$ is computed by normalizing rewards within the group
Keeping only terms with $\hat A_i>0$ corresponds to {positive sample reinforcement} (PSR), whereas keeping only terms with $\hat A_i<0$ corresponds to {negative sample reinforcement} (NSR).
\paragraph{Theoretical Analysis.}
We develop a theoretical framework that links LLD to negative--positive head-gradient interference, decomposes the output-head gradient inner product into logit and representation terms, and motivates a single-forward proxy that upper-bounds representation alignment to guide conservative advantage reweighting.

We start from LLD, which characterizes the failure of training to increase the log-likelihood of correct trajectories.
For a prompt $c$ with a positive target $y^+$, define
$\Delta(c)=\ln \pi_{\theta_{\mathrm{fin}}}(y^+|c)-\ln \pi_{\theta_{\mathrm{init}}}(y^+|c)$ as the {training-induced log-likelihood gain} of $y^+$.
Defining $\ell=-\log\pi(\cdot)$ and assuming small output-head updates, $\Delta(c)$ admits the first-order approximation
\begin{equation}
\label{eq:lld_bridge}
\begin{aligned}
\Delta(c) &\approx -\eta \sum_{(i,t)\in\mathcal{N}(c)} \left\langle \nabla_W \ell^+,\, g^-_{i,t}\right\rangle \\
&\propto -\eta \sum_{(i,t)\in\mathcal{N}(c)} \frac{1}{A^+}\left\langle g^+,\, g^-_{i,t}\right\rangle,
\end{aligned}
\end{equation}
where $\mathcal{N}(c)$ indexes token positions $(i,t)$ from negative trajectories sampled under the same prompt $c$ that contribute to the head update,
$g^+\triangleq \nabla_W \ell^+$, and $g^-_{i,t}\triangleq \nabla_W \ell^-_{i,t}$ denote the corresponding output-head gradients (w.r.t.\ $W$).
Here $A^+>0$ denotes the advantage weight of the positive trajectory.
Thus, LLD is governed by accumulated cross-sign head-gradient interference.

Although gradient inner products directly quantify LLD \citep{yu2020gradient}, token-wise full-parameter evaluation is prohibitive at scale (extra backward passes, parameter-sized communication, and sharding-induced variance) \citep{rajbhandari2020zero} as shown in Appendix~\ref{app:complexity_resrl}. We therefore focus on the output head $W$, where gradients factorize, and use a stable single-forward geometric proxy: the orthogonal-complement energy $e(x)$.

Let $x\in\mathbb{R}^d$ denote a token representation immediately before the output head. Standard language models produce logits via a linear output head $z=Wx$, and the token loss takes the form $\ell=\ell(z)$. Under this setting, head-gradient alignment factorizes into logit and representation components, motivating representation geometry as a proxy for gradient interference.

\begin{lemma}[Gradient inner-product decomposition]
\label{lem:factor}
Let $\delta=\nabla_z \ell \in \mathbb{R}^{|\mathcal{V}|}$ be the backprop signal at the logits.
Since $\nabla_W \ell = \delta x^\top$, for any $(\delta_1,x_1)$ and $(\delta_2,x_2)$, (Appendix~\ref{app:proof_lem_factor})
\begin{equation}
\label{eq:grad_factor}
\left\langle \nabla_W \ell_1,\ \nabla_W \ell_2 \right\rangle
=
\langle \delta_1,\delta_2\rangle\cdot \langle x_1,x_2\rangle.
\end{equation}
\end{lemma}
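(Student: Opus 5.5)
The plan is to establish the two ingredients separately: first the outer-product form of the head gradient, then the Frobenius inner product of two rank-one matrices.

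\textbf{Step 1: the head gradient is an outer product.} I will derive $\nabla_W \ell = \delta x^\top$ by the chain rule applied to $z=Wx$. Since $z_k=\sum_j W_{kj}x_j$, we have $\partial z_k/\partial W_{ab}=\mathbb{1}[k=a]\,x_b$. Therefore
\[
\frac{\partial \ell}{\partial W_{ab}}=\sum_k \frac{\partial \ell}{\partial z_k}\frac{\partial z_k}{\partial W_{ab}}=\delta_a x_b .
\]
In matrix form this is $\nabla_W\ell=\delta x^\top$. This holds for any differentiable $\ell(z)$, with $x$ treated as fixed with respect to $W$, which is true because $x$ is computed upstream of the head.

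\textbf{Step 2: the inner product of two rank-one matrices.} The inner product on matrices is the Frobenius one, $\langle A,B\rangle=\operatorname{tr}(A^\top B)=\sum_{a,b}A_{ab}B_{ab}$. Substituting the outer-product form gives
\[
\langle \delta_1x_1^\top,\delta_2x_2^\top\rangle=\operatorname{tr}(x_1\delta_1^\top\delta_2x_2^\top)=(\delta_1^\top\delta_2)\operatorname{tr}(x_1x_2^\top)=\langle\delta_1,\delta_2\rangle\langle x_1,x_2\rangle .
\]
The key moves are that the scalar $\delta_1^\top\delta_2$ can be pulled out of the trace, and that $\operatorname{tr}(x_1x_2^\top)=x_2^\top x_1$ by cyclicity of the trace. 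As a cross-check, the entrywise computation $\sum_{a,b}\delta_{1a}x_{1b}\delta_{2a}x_{2b}$ factors directly into the same product of two sums.

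\textbf{Main obstacle.} There is no real mathematical difficulty here. The only care needed is to state the conventions explicitly: the inner product is the Frobenius one, and $x$ does not depend on $W$, so gradients with respect to other parameters are excluded. If the head has a bias term, I would restrict the statement to the $W$ block, and the bias gradient $\delta$ would contribute separately.
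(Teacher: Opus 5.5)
Your proposal is correct and follows the paper's proof: it also derives $\nabla_W\ell=\delta x^\top$ by the entrywise chain rule, then factors the Frobenius inner product via the entrywise double sum you give as a cross-check, rather than via your trace identity. Your explicit conventions — that $x$ does not depend on $W$ and that a bias term would be handled separately — are sensible points the paper leaves implicit.
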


With token-wise scaling $A_{i,t}$, define the effective head update $g_{i,t}\propto A_{i,t}\nabla_W \ell_{i,t}$, where $\propto$ suppresses a shared token-independent positive scalar. By Lemma~\ref{lem:factor}, we get
\begin{equation}
\label{eq:pg_scaled_bound}
|\langle g_1,g_2\rangle|
\ \propto\ 
|A_1A_2|\ |\langle \delta_1,\delta_2\rangle|\ |\langle x_1,x_2\rangle|.
\end{equation}
Thus, cross-sign head-gradient interference $|\langle g^-,g^+\rangle|$ splits into a logit-space term $|\langle \delta^-,\delta^+\rangle|$ and a representation term $|\langle x^-,x^+\rangle|$ (Appendix~\ref{app:proof_eq_pg_scaled_bound}).

To avoid token-wise gradient estimation, we upper-bound the within-group alignment $|\langle x^-,x^+\rangle|$, treating $|\langle \delta^-,\delta^+\rangle|$ as an unmodeled multiplicative factor. Motivated by anisotropy and approximate low-rank structure in Transformer representations \citep{joshi2025geometry,inkiriwang-etal-2025-really}, we fit positives with a rank-$k$ subspace.

\begin{definition}[Positive subspace construction]
\label{def:subspace}
Let $\mathcal{P}$ be the set of positive tokens in a prompt group, and let
$X^+\in\mathbb{R}^{|\mathcal{P}|\times d}$ stack their centered representations
(preprocessing in \S\ref{sec:mcwnr_weighting}). Let $V_k\in\mathbb{R}^{d\times k}$
be the top-$k$ principal directions of $X^+$. Define (Appendix~\ref{app:subspace_details})
\begin{equation}
S=\mathrm{span}(V_k),\qquad P_S=V_kV_k^\top.
\end{equation}
\end{definition}

\begin{definition}[Orthogonal-complement energy]
\label{def:energy}
For any representation $x$, define
\begin{equation}
\label{eq:residual_energy}
e(x)\ \triangleq\ \frac{1}{d}\left\|(I-P_S)x\right\|_2^2.
\end{equation}
It is the normalized squared residual of $x$ w.r.t.\ the positive subspace $S$ (Appendix~\ref{app:energy_details}).
\end{definition}

\begin{lemma}[Alignment bound] \label{lem:residual_bound} For any $x^+\in S$ and any $x\in\mathbb{R}^d$, \begin{equation} \label{eq:alignment_bound} \begin{aligned} \langle x, x^+\rangle^2 &\le \|x^+\|_2^2 \left(\|x\|_2^2 - \|(I-P_S)x\|_2^2\right) \\ &= \|x^+\|_2^2 \left(\|x\|_2^2 - d\,e(x)\right). \end{aligned} \end{equation} \end{lemma}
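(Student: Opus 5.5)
The plan is to decompose $x$ orthogonally with respect to $S$ and apply Cauchy--Schwarz inside the subspace. Since $P_S=V_kV_k^\top$ and the columns of $V_k$ are orthonormal principal directions, $P_S$ is symmetric and idempotent, so $P_S$ is the orthogonal projector onto $S$. I write $x = P_Sx + (I-P_S)x$, where $P_Sx\in S$ and $(I-P_S)x\in S^\perp$.

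The first step uses $x^+\in S$, so that $P_Sx^+=x^+$. By symmetry of $P_S$,
\[
\langle x,x^+\rangle=\langle x,P_Sx^+\rangle=\langle P_Sx,x^+\rangle .
\]
Equivalently, $\langle (I-P_S)x, x^+\rangle=0$ because $(I-P_S)x\perp S$. The residual component therefore contributes nothing to the alignment.

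The second step applies Cauchy--Schwarz to get
\[
\langle x,x^+\rangle^2\le \|P_Sx\|_2^2\,\|x^+\|_2^2 .
\]
The third step evaluates $\|P_Sx\|_2^2$ using Pythagoras for the orthogonal decomposition:
\[
\|x\|_2^2=\|P_Sx\|_2^2+\|(I-P_S)x\|_2^2 .
\]
This gives $\|P_Sx\|_2^2=\|x\|_2^2-\|(I-P_S)x\|_2^2$, which is the first line of \eqref{eq:alignment_bound}. The second line then follows directly from Definition~\ref{def:energy}, since $\|(I-P_S)x\|_2^2 = d\,e(x)$.

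There is no real obstacle here. The only point to make explicit is that $V_k$ has orthonormal columns, so that $P_S$ is an orthogonal rather than an oblique projector. This holds because the columns of $V_k$ are right singular vectors from the SVD of $X^+$, as in Definition~\ref{def:subspace}. Without this property, both the symmetry step and the Pythagorean step would fail.
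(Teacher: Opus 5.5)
Your proposal is correct and follows the paper's proof essentially step for step. Like the paper, you split $x=P_Sx+(I-P_S)x$, discard the orthogonal component using $x^+\in S$, apply Cauchy--Schwarz to $\langle P_Sx,x^+\rangle$, and rewrite $\|P_Sx\|_2^2$ via the Pythagorean identity before substituting $\|(I-P_S)x\|_2^2=d\,e(x)$; your remark that the orthonormality $V_k^\top V_k=I_k$ is what makes $P_S$ an orthogonal projector is the same assumption the paper states explicitly.
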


Lemma~\ref{lem:residual_bound} shows that increasing $e(x)$ decreases an upper bound on the attainable similarity between $x$ and any positive direction in $S$. Proof is deferred to Appendix~\ref{app:proof_residual_bound}.

\begin{theorem}[Residual proxies gradient alignment]
\label{thm:proxy_grad}
Construct $S,P_S$ as in Definition~\ref{def:subspace} and $e(x)$ as in Definition~\ref{def:energy}.
For any representations $(x^-,x^+)$, we bound $|\langle x^-,x^+\rangle|$ via $e(\cdot)$
\begin{equation}
\label{eq:proxy_to_repr_compact}
\begin{aligned}
|\langle x^-,x^+\rangle|
&\le
\|P_S x^+\|_2\sqrt{\|x^-\|_2^2-d\,e(x^-)} \\
&+\|x^-\|_2\sqrt{d\,e(x^+)}.
\end{aligned}
\end{equation}
Consequently, for fixed $x^+$, the subspace-dependent term is monotonically decreasing in $e(x^-)$. Assuming that $S$ sufficiently covers positive tokens (i.e., $e(x^+)\le \varepsilon_+$), we obtain (proof in Appendix~\ref{app:proof_proxy_grad})
\begin{equation}
\label{eq:proxy_to_repr_eps_compact}
\begin{aligned}
|\langle x^-,x^+\rangle| 
&\le \|P_S x^+\|_2\,\sqrt{\|x^-\|_2^2-d\,e(x^-)} \\
& + \|x^-\|_2\,\sqrt{d\,\varepsilon_+},
\end{aligned}
\end{equation}
which makes $e(x^-)$ a conservative proxy for interference, up to an additive error.
\end{theorem}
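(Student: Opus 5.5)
The proof splits $x^+$ into its component in the positive subspace and its residual. Write $x^+ = P_S x^+ + (I-P_S)x^+$. Bilinearity and the triangle inequality give
\begin{equation*}
|\langle x^-,x^+\rangle| \le |\langle x^-,P_S x^+\rangle| + |\langle x^-,(I-P_S)x^+\rangle|.
\end{equation*}
Each term matches one summand of \eqref{eq:proxy_to_repr_compact}, so we bound them separately.

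\emph{In-subspace term.} Since $P_S x^+\in S$, Lemma~\ref{lem:residual_bound} applies with $x=x^-$ and the positive vector $P_S x^+$:
\begin{equation*}
\langle x^-,P_S x^+\rangle^2 \le \|P_S x^+\|_2^2\left(\|x^-\|_2^2 - d\,e(x^-)\right).
\end{equation*}
Taking square roots gives the first summand. The radicand is nonnegative because $\|x^-\|_2^2-\|(I-P_S)x^-\|_2^2=\|P_S x^-\|_2^2$ by the Pythagorean identity for the orthogonal projector $P_S=V_kV_k^\top$, which requires $V_k$ to have orthonormal columns. If one wants the derivation self-contained, the same identity gives it directly: $\langle x^-,P_Sx^+\rangle=\langle P_Sx^-,P_Sx^+\rangle$ by symmetry and idempotence of $P_S$, and Cauchy--Schwarz finishes.

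\emph{Residual term.} Cauchy--Schwarz gives
\begin{equation*}
|\langle x^-,(I-P_S)x^+\rangle|\le \|x^-\|_2\,\|(I-P_S)x^+\|_2 = \|x^-\|_2\sqrt{d\,e(x^+)},
\end{equation*}
where the equality is Definition~\ref{def:energy}. Adding the two bounds yields \eqref{eq:proxy_to_repr_compact}.

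\emph{Monotonicity.} Fix $x^+$ and $\|x^-\|_2$. The first summand is a positive constant times $\sqrt{\|x^-\|_2^2-d\,e(x^-)}$, and $u\mapsto\sqrt{a-du}$ is decreasing on its domain $[0,a/d]$, so this term decreases as $e(x^-)$ grows. Note that the second summand also depends on $x^-$, but only through $\|x^-\|_2$, not through $e(x^-)$. The monotonicity claim therefore only makes sense at fixed norm of $x^-$; this is consistent with the statement calling the first term the ``subspace-dependent'' one. Finally, if $e(x^+)\le\varepsilon_+$, monotonicity of the square root gives $\sqrt{d\,e(x^+)}\le\sqrt{d\,\varepsilon_+}$, which yields \eqref{eq:proxy_to_repr_eps_compact}.

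\emph{Main obstacle.} The substance here is elementary. The delicate step is the bookkeeping: Lemma~\ref{lem:residual_bound} requires its positive vector to lie in $S$, so we must use $P_S x^+$ rather than $x^+$. We also need $P_S$ to be an orthogonal projector, which holds because $V_k$ consists of principal directions, i.e.\ orthonormal right singular vectors. A further subtlety is that representations are centered in Definition~\ref{def:subspace}. The bound holds for whichever vectors $x^\pm$ are fed in, so we state it for the centered representations used to define $e$.
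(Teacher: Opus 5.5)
Your proposal is correct and follows essentially the same route as the paper's proof: split $x^+$ into $P_S x^+$ and $(I-P_S)x^+$, apply Lemma~\ref{lem:residual_bound} to the in-subspace part and Cauchy--Schwarz to the residual, then read off monotonicity and substitute $e(x^+)\le\varepsilon_+$. Your note that monotonicity in $e(x^-)$ holds only with $\|x^-\|_2$ fixed is consistent with the paper's argument, which also studies $u\mapsto\sqrt{\|x^-\|_2^2-u}$ with $\|x^-\|_2$ held fixed.
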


To mitigate LLD, we apply the reshaped token gradient updates in Eq.~\ref{eq:weighted_adv} into Theorem~\ref{thm:proxy_grad}, yielding the conservative gradient interference upper-bounds:
\begin{equation}
\label{eq:weighted_proxy_bound}
\small
\begin{aligned}
|\langle \tilde g^-,\tilde g^+\rangle|
&\ \propto\ \omega^-\,\lambda_{pos}\,|A^-A^+|\ |\langle \delta^-,\delta^+\rangle|\ |\langle x^-,x^+\rangle| \\
&\ \le\ \omega^-\,\lambda_{pos}\,|A^-A^+|\ |\langle \delta^-,\delta^+\rangle|
\Bigl( \\
&\|P_S x^+\|_2\,\sqrt{\|x^-\|_2^2-d\,e(x^-)} 
+ \|x^-\|_2\,\sqrt{d\,e(x^+)}
\Bigr).
\end{aligned}
\end{equation}

\begin{figure*}[h]
  \centering
  \includegraphics[width=\textwidth]{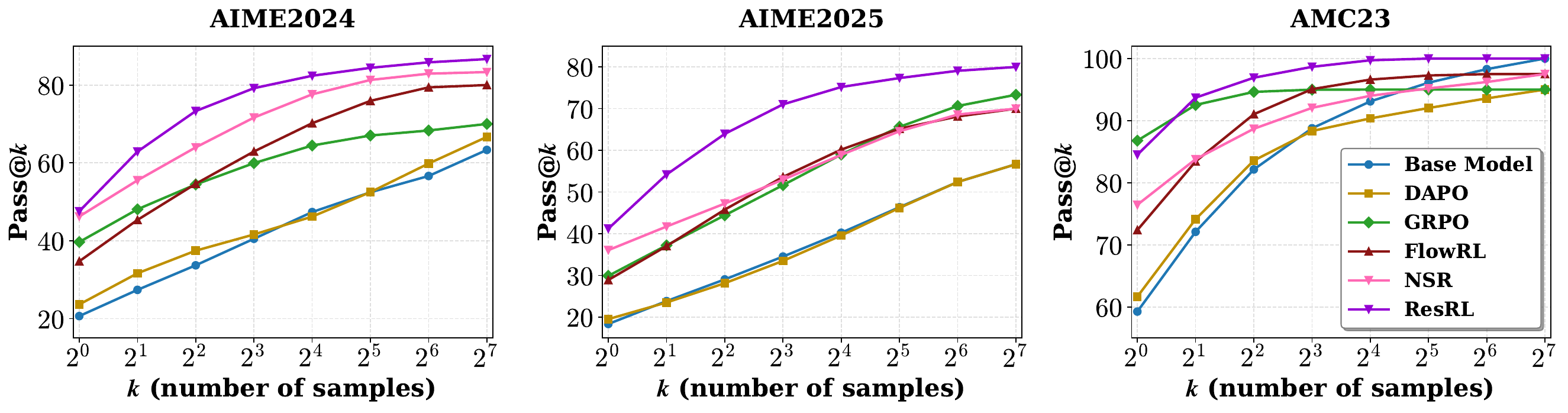} 
  \caption{Pass@$k$ performance on AIME24/25 and AMC23 using Qwen3-4B. ResRL consistently dominates the high-$k$ regime, outperforming the base model and diversity-oriented baselines such as NSR and FlowRL, indicating a widened capability frontier.}
  \label{fig:passk_4}
   \vspace{-0.5em}
\end{figure*}

\begin{table*}[h]
    \centering
    \caption{Avg@16 performance comparison on mathematical reasoning benchmarks using Qwen3 variants. All models are trained with {4096 max response length}. ResRL achieves superior performance compared to existing methods.}
    \label{tab:math1}
    \resizebox{0.98\textwidth}{!}{
    \begin{tabular}{l|cccccc|c}
        \toprule
        \textbf{Method} & \textbf{AIME24} & \textbf{AIME25} & \textbf{AMC23} & \textbf{MATH500} & \textbf{Minerva} & \textbf{Olympiad} & \textbf{Average Acc.} \\
                \midrule
        Qwen3-1.7B Backbone \citep{yang2025qwen3} & 11.0 & 9.8 & 43.9 & 69.5 & 26.1 & 38.1 & 33.1 \\
        GRPO  \citep{shao2024deepseekmath}   & 12.3 & 13.8 & 54.2 & 71.5 & 27.5 & 36.0 & 35.9 \\
        DAPO  \citep{yu2025dapo}     & 10.0 & 8.4 & 57.5 & 70.9 & 30.3 & 33.6 & 35.2 \\
        FlowRL\citep{zhu2025flowrl}     & 21.6 & 15.8 & 58.4 & 76.9 & 30.5 & 48.6 & 42.0 \\
        NSR (Weighted-Reinforce) \citep{zhu2025nsr} & 27.0 & 20.4 & 66.7 & 83.5 & 33.9 & 53.5 & 47.5 \\
        \textbf{ResRL (ours)} & 26.9 & 21.3 & 66.9 & 84.4 & 35.5 & 56.6 & \textbf{48.6} \\
        \midrule
        Qwen3-4B Backbone & 20.0 & 17.3 & 56.9 & 77.8 & 36.9 & 48.2 & 42.9 \\
        GRPO     & 37.1 & 27.7 & 87.2 & 79.9 & 31.5 & 55.1 & 53.1 \\
        DAPO      & 23.5 & 18.9 & 63.4 & 80.8 & 39.1 & 51.2 & 46.2 \\
        FlowRL     & 35.4 & 30.2 & 74.5 & 84.7 & 38.9 & 58.1 & 53.6 \\
        NSR (Weighted-Reinforce) & 38.5 & 33.1 & 79.8 & 77.4 & 33.5 & 50.1 & 52.1 \\
        \textbf{ResRL (ours)} & 45.2 & 38.6 & 89.4 & 77.8 & 38.6 & 52.3 & \textbf{57.0} \\
        \midrule
        Qwen3-8B Backbone & 25.4 & 18.1 & 61.4 & 77.6 & 39.2 & 48.6 & 45.1 \\
        GRPO  & 36.3 & 29.2 & 78.0 & 89.4 & 42.1 & 62.0 & 56.2 \\
        DAPO & 24.2 & 24.0 & 71.3 & 76.2 & 35.3 & 43.6 & 45.8 \\
        FlowRL & 47.7 & 33.3 & 85.8 & 92.1 & 44.6 & 68.5 & {62.1} \\
        NSR (Weighted-Reinforce) & 55.4 & 38.5 & 89.8 & 87.3 & 40.0 & 60.6 & 61.9 \\
        \textbf{ResRL (ours)} & 50.8 & 41.1 & 89.7 & 92.7 & 46.0 & 68.1 & \textbf{64.7} \\
        \bottomrule
    \end{tabular}}
 \vspace{-1.0em}
\end{table*}

\subsection{Algorithm Design}
\label{sec:mcwnr_weighting}
ResRL instantiates the representation-space proxy in Theorem~\ref{thm:proxy_grad} by estimating a positive subspace $S$ from positive samples and converting each negative token's orthogonal-complement energy $e(x)$ into a token-wise NSR weight.

\paragraph{Semantic Representations and Preprocessing.}
We utilize the hidden states $h_{i,t}\in\mathbb{R}^d$ from the penultimate hidden layer. While the final hidden layer directly feeds the output head, we extract representations from the preceding layer to capture high-level semantic abstractions that are less biased by the immediate token-prediction objective \citep{rogers2020primer}. 

To strictly align with the geometric assumptions in Definition~\ref{def:subspace}, we map these raw hidden states to the analysis space via normalization and centering. For a group of positive tokens $\mathcal{P}$, we first compute the group-wise centroid of the normalized representations:
\begin{equation}
\label{eq:centering_mu}
\mu^+ = \frac{1}{|\mathcal{P}|} \sum_{h' \in \mathcal{P}} \mathrm{LN}(h'),
\end{equation}
where $\mathrm{LN}(\cdot)$ denotes LayerNorm \citep{ba2016layer}. The centered representation $x$ for any token $h$ (used for both subspace construction and energy calculation) is then obtained by:
\begin{equation}
\label{eq:centering_x}
x = \mathrm{LN}(h) - \mu^+.
\end{equation}
This centering ensures that the subspace $S$ captures the covariance structure of the positive distribution, making the orthogonal-complement energy $e(x)$ a robust metric for deviation from the ``correct" reasoning trajectory.

\paragraph{Subspace Estimation and Residual Computation.}
While Definition~\ref{def:subspace} defines the ideal subspace $S$ using the full positive set $\mathcal{P}$, computing SVD on all tokens is computationally prohibitive for long contexts. Therefore, we employ a sampling-based approximation.
For each prompt group, we uniformly sample $M$ centered positive tokens to form a reference sub-matrix $\hat{X}^+ \subset \mathbb{R}^{M \times d}$.
We then perform truncated SVD on this matrix:
\begin{equation}
\label{eq:svd_decomp}
\hat{X}^+ = U \Sigma V^\top,
\end{equation}
where $U$ and $V$ contain the left and right singular vectors, respectively, and $\Sigma$ is the diagonal matrix of singular values. We extract the top-$k$ principal directions corresponding to the largest singular values to form $V_k \in \mathbb{R}^{d \times k}$ (the first $k$ columns of $V$) and construct the projector $P_S = V_k V_k^\top$.

With this estimated subspace, we quantify the gradient interference risk for each negative token $x^-_{i,t}$. We instantiate the orthogonal-complement energy $e(x)$ as the projection residual $\mathcal{R}_{i,t}$, computed as:
\begin{equation}
\label{eq:neg_resid}
\mathcal{R}_{i,t}\ \triangleq\ \frac{1}{d}\left\|(I-P_S)x^-_{i,t}\right\|_2^2.
\end{equation}
This term $\mathcal{R}_{i,t}$ serves as the tractable proxy for the theoretical interference bound derived in Theorem~\ref{thm:proxy_grad}.

\paragraph{Group-Relative Gating.}
Since the scale of projection residuals may vary significantly across different prompts, we employ group-relative quantile normalization to robustly identify relative alignment.
Let $\mathbf{D}=\{\mathcal{R}_{i,t}\}$ denote their projection residuals and $\mathcal{Q}(\mathbf{D},\gamma)$ the empirical $\gamma$-quantile. 
We set
\begin{equation}
\label{eq:Q}
q_{\text{low}}=\mathcal{Q}(\mathbf{D},\alpha),\qquad
q_{\text{high}}=\mathcal{Q}(\mathbf{D},\beta),
\end{equation}
where $(\alpha,\beta)$ define a robust range by replacing min/max with quantiles. We then compute a quantile-based min--max normalized residual score with clipping:
\begin{equation}
\label{eq:quantile_norm}
z_{i,t}
=
\mathrm{clamp}\!\left(
\frac{\mathcal{R}_{i,t}-q_{\text{low}}}{(q_{\text{high}}-q_{\text{low}})+\epsilon},\ 0,\ 1
\right),
\end{equation}
where $\epsilon>0$ prevents division by zero. Finally, we map $z_{i,t}$ to a token-wise NSR weight in $[\xi,1]$ via
\begin{equation}
\label{eq:weight_map}
\omega_{i,t} = \xi + (1-\xi)\, z_{i,t},
\end{equation}
where $\xi\in(0,1]$ denotes the minimum weight.

\begin{table}[t]
\centering
\small
\setlength{\tabcolsep}{4pt}
\caption{Performance on code reasoning benchmarks using Qwen3-4B. We report LiveCodeBench Avg/Pass@16, CodeForces Rating/Percentile (Pct.), and HumanEval+ Pass@16.}
\label{code_1}
\begin{tabular}{lccc}
\toprule
\textbf{Model} & \textbf{LiveCodeBench} & \textbf{CodeForces} & \textbf{HumanEval+} \\
 & \textbf{Avg/Pass@16} & \textbf{Rating (Pct.)} & \textbf{Pass@16} \\
\midrule
Backbone & 30.5/40.9 & 578.8 (1.2)   & 89.0 \\
GRPO     & 39.5/55.1 & 1267.9 (63.1) & 95.7 \\
FlowRL   & 42.4/58.7 & 1333.7 (68.7) & 95.7 \\
DAPO     & 41.0/52.3 & 1112.5 (46.7) & 95.7 \\
NSR      & 32.8/52.3 & 1340.9 (69.3) & 96.9 \\
\midrule
\textbf{ResRL} & \textbf{43.2/59.9} & \textbf{1469.5 (78.9)} & \textbf{97.0} \\
\bottomrule
\end{tabular}
 \vspace{-1.5em}
\end{table}

\begin{table*}[h]
\centering
\caption{Performance comparison on ALFWorld and WebShop using Qwen2.5-7B-Instruct. We report the Success Rate (\%) for ALFWorld and both Score and Success Rate for WebShop, averaged over 3 random seeds. Baseline results are adopted from \citep{wang2025harnessing}.}
\label{tab:alfworld}
\resizebox{0.95\textwidth}{!}{
\begin{tabular}{l|ccccccc|cc}
\toprule
\multirow{2}{*}{\textbf{Method}} & \multicolumn{7}{c}{\textbf{ALFWorld}} & \multicolumn{2}{c}{\textbf{WebShop}} \\
\cmidrule(lr){2-8} \cmidrule(lr){9-10}
 & \textbf{Pick} & \textbf{Look} & \textbf{Clean} & \textbf{Heat} & \textbf{Cool} & \textbf{Pick2} & \textbf{All} & \textbf{Task Score} & \textbf{Succ.} \\
\midrule
GPT-4o \citep{hurst2024gpt} & 75.3 & 60.8 & 31.2 & 56.7 & 21.6 & 49.8 & 48.0 & 31.8 & 23.7 \\
Gemini-2.5-Pro \citep{comanici2025gemini} & 92.8 & 63.3 & 62.1 & 69.0 & 26.6 & 58.7 & 60.3 & 42.5 & 35.9 \\
\midrule
Prompting Backbone & 33.4 & 21.6 & 19.3 & 6.9 & 2.8 & 3.2 & 14.8 & 26.4 & 7.8 \\
Prompting ReAct \citep{yao2022react} & 48.5 & 35.4 & 34.3 & 13.2 & 18.2 & 17.6 & 31.2 & 46.2 & 19.5 \\
PPO (with critic) \citep{ouyang2022training} & 92.3 & 64.0 & 92.5 & 89.5 & 80.3 & 68.8 & 80.4 & 81.4 & 68.7 \\
GRPO \citep{shao2024deepseekmath} & 88.8 & 43.7 & 88.1 & 70.3 & 77.7 & 56.8 & 74.8 & 77.8 & 65.6 \\
EMPG \citep{wang2025harnessing} & 92.9 & 75.2 & 74.8 & 86.3 & 73.7 & 65.3 & 78.5 & 81.0 & 69.3 \\
\textbf{ResRL (ours)} & 90.1 & 85.5 & 98.0 & 83.0 & 78.7 & 84.2 & \textbf{86.7} & \textbf{81.2} & \textbf{71.5} \\

\bottomrule
\end{tabular}
}
 \vspace{-1.0em}
\end{table*}
\paragraph{Objective Function.}
The advantages of policy optimization utilize token-wise coefficient $\tilde{A}_{i,t}$:
\begin{equation}
\label{eq:weighted_adv}
\tilde{A}_{i,t}=
\begin{cases}
\lambda_{pos} \hat{A}_i, & \hat{A}_i > 0, \\
\omega_{i,t} \hat{A}_i, & \hat{A}_i \le 0.
\end{cases}
\end{equation}
For positive advantages ($\hat{A}_i > 0$), we employ a small positive scaling $\lambda_{pos} = 0.1$ as a weak anchoring mechanism to prevent model collapse following \citep{zhu2025nsr}. The weight $\omega_{i,t}$ for negative samples ($\hat{A}_i \le 0$) is defined by Eq.~\eqref{eq:weight_map}. Formally, the optimization objective of ResRL is defined as:
\begin{equation}
\label{eq:mcwnr}
\begin{aligned}
\mathcal{L}_{\text{ResRL}}(\theta) = &\mathbb{E}_{x,\mathcal{G}} \Biggl[ \frac{1}{G}\sum_{i=1}^G \frac{1}{T_i}\sum_{t=1}^{T_i}
\min\Bigl(\rho_{i,t}\tilde A_{i,t},\\\ &\mathrm{clip}(\rho_{i,t},1-\epsilon,1+\epsilon)\tilde A_{i,t}\Bigr) \Biggr].
\end{aligned}
\end{equation}

Eq.~\eqref{eq:mcwnr} indicates that negative tokens whose representations are highly aligned with the positive subspace are downweighted, reducing the probability of accidentally suppressing shared positive directions; tokens deviating into the orthogonal complement receive a relatively higher penalty by being assigned higher weights (Algorithm~\ref{alg:pswnr}).

\begin{figure*}[h]
  \centering
  \includegraphics[width=\textwidth]{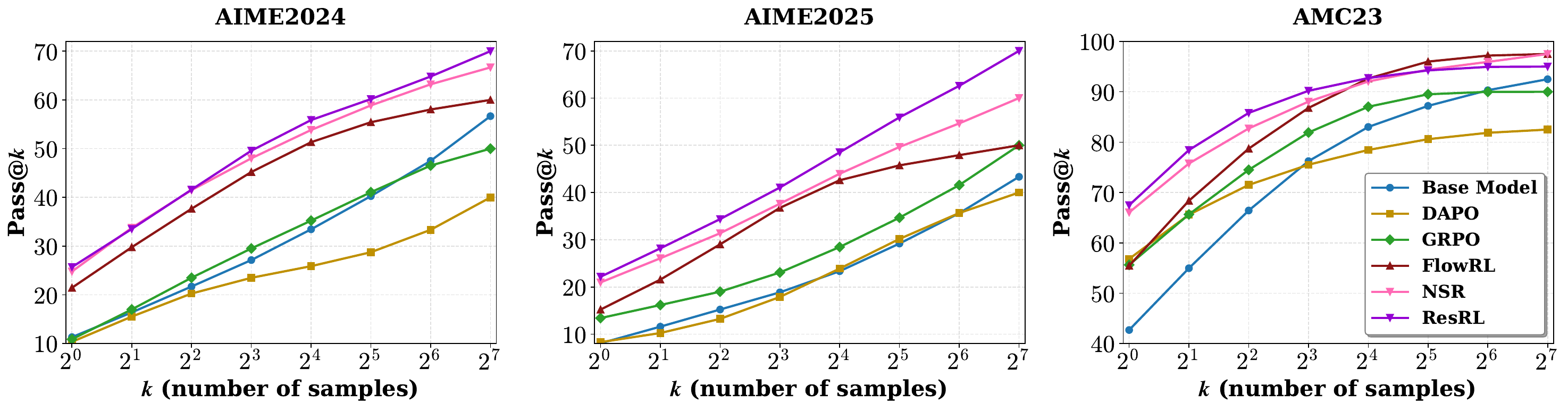} 
  \caption{Pass@$k$ performance on AIME24/25 and AMC23 using Qwen3-1.7B.
ResRL demonstrates consistent superiority on the challenging AIME datasets across all sampling budgets ($k=2^0$ to $2^7$). On AMC23, ResRL leads in low-sample regimes and converges with baselines at high $k$ due to task saturation.}
\label{fig:pass1k}
 \vspace{-0.5em}
\end{figure*}

\section{Experiment Analysis}
\subsection{Training Details}
\paragraph{Baselines.}
We compare our method against RLVR and NSR baselines on twelve benchmarks spanning Mathematics, Code, Agent tasks, and Function Calling. These baselines include (i) GRPO \citep{shao2024deepseekmath}, DAPO \citep{yu2025dapo}, FlowRL \citep{zhu2025flowrl}, and NSR \citep{zhu2025nsr} for math and code tasks; (ii) ReAct \citep{yao2022react}, PPO \citep{ouyang2022training}, GRPO, and EMPG \citep{wang2025harnessing} for long-horizon agent tasks; and (iii) ResT \citep{lin2025rest}, ToolACE\citep{liu2025toolace} and NSR for function call tasks. To verify the scalability of ResRL and align with base models of these baselines, we employ several variants of the Qwen series as our base models with parameters ranging from 1.7B $\sim$ 8B.

\begin{table*}[t]
\caption{Performance on BFCL benchmark. The column abbreviations stand for:
           \textbf{OA} (Overall), \textbf{B} (Base),
           \textbf{MF} (Miss Func), \textbf{MP} (Miss Param),
           \textbf{LC} (Long Context), \textbf{NL} (Non-Live),
           and \textbf{L} (Live). 
           Baseline results are adopted from \citep{patil2024gorilla} and \citep{lin2025rest}.}
\label{bfcl1}
  \centering
  \resizebox{0.95\textwidth}{!}{
  \begin{tabular}{@{}l|c|ccccc|cc|c@{}}
    \toprule
    \multirow{2}{*}{\textbf{Models}} & \multirow{2}{*}{\textit{Parameter}} &
    \multicolumn{5}{c}{\textbf{Multi-Turn}} &
    \multicolumn{2}{c}{\textbf{Single-Turn}} &
    \multirow{2}{*}{\textbf{Overall Acc.}} \\ 
    \cmidrule(lr){3-7} \cmidrule(lr){8-9} 
     & & \textit{OA} & \textit{B} & \textit{MF} & \textit{MP} & \textit{LC} & \textit{NL} & \textit{L} & \\ 
    \midrule
    GPT-5-2025-08-07 & / & 28.50 & 33.50 & 29.50 & 23.00 & 28.00 & 72.92 & 58.25 & 52.65 \\
    Grok-4-0709 & / & 36.12 & 44.00 & 31.00 & 26.00 & 43.50 & 85.21 & 74.39 & 64.56 \\
    Qwen3-235B-A22B\citep{yang2025qwen3} & 235B & 40.12 & 49.00 & 41.00 & 29.50 & 41.00 & 87.90 & 77.03 & 67.69 \\
    ToolACE-2-8B\citep{liu2025toolace} & 8B & 37.00 & 47.00 & 31.00 & 28.00 & 42.00 & 87.87 & 77.20 & 66.65 \\
    ResT-8B \citep{lin2025rest} & 8B & 40.13 & {50.50} & 45.00 & 32.00 & 33.00 & {90.08} & 79.03 & 68.76 \\
 \midrule
    NSR & 8B & 36.37 & 43.00 & 41.00 & 29.00 & 32.50 & 88.00 & 80.23 & 67.80 \\
    \textbf{ResRL (ours)} & 8B & \textbf{41.25} & {48.50} & {47.00} & {34.00} & {35.50} & {89.46} & {78.14} & \textbf{68.95} \\
    \bottomrule
  \end{tabular}}
   \vspace{-1.0em}
\end{table*}

\paragraph{Training Datasets.}
For mathematics, we use the DAPO training set \citep{yu2025dapo} and train in \texttt{no-think} mode with a 4096-token budget. For code, we adopt the DeepCoder dataset \citep{deepcoder2025} and train in \texttt{think} mode with an 8192-token budget.
For agent tasks, we conduct experiments following the settings in \citep{wang2025harnessing}. For  function calling, we adopt the same training set as ToolRL \citep{qian2025toolrl}. Following official \texttt{veRL} \citep{sheng2025hybridflow} implementations, we ensure fair comparison by employing identical hyperparameters, including learning rate, batch size, and training duration, while evaluating all models after training to convergence under the same budget.

\paragraph{Evaluation Metrics.}
We evaluate on math benchmarks (AIME 2024/2025~\citep{AIME}, AMC 2023~\citep{AMC}, MATH-500~\citep{lightman2023math500}, Minerva~\citep{minerva}, Olympiad~\citep{he2024olympiadbench}), code benchmarks (LiveCodeBench~\citep{jain2024livecodebench}, CodeForces~\citep{penedo2025codeforces}, HumanEval+~\citep{chen2021humanevaluating}), agent benchmarks (WebShop~\citep{yao2022webshop}, ALFWorld~\citep{shridhar2020alfworld}), and function calling (BFCL~\citep{patil2024gorilla}).
We report Avg@16 accuracy in Table~\ref{tab:math1} (mean over 16 independent generations), and additionally CodeForces Elo and percentile in Table~\ref{code_1}.
For math/code, we use temperature $0.6$, $top\_p=0.95$, and an 8{,}192 max response length~\citep{zhu2025flowrl}; for agents, we use rollout temperature $1.0$ with a 50-step cap for ALFWorld and 15 for WebShop~\citep{wang2025harnessing}.

\subsection{Main Results}
ResRL yields consistent improvements across mathematics, code, long-horizon agents, and tool-use.
On Mathematical benchmarks in Table~\ref{tab:math1}, ResRL indicates best performance regarding Avg@16 and outperforms the second-best FlowRL by 15.7\%, 6.3\%, and 4.2\% on 1.7B, 4B, and 8B, respectively.
It also outperforms NSR on Avg@16 by 2.3\%, 9.4\%, and 4.5\% on 1.7B, 4B, and 8B, indicating that semantic decoupling yields additional gains beyond negative upweighting.
The improvements concentrate on harder subsets: on Qwen3-4B, ResRL boosts AIME24, AIME25, and AMC23 by 27.7\%, 27.8\%, and 20.0\% over FlowRL; on Qwen3-8B, it increases AIME25 by 23.4\% over FlowRL. We additionally compare the performance of NSR and ResRL on Qwen3-32B in Table~\ref{tab:math32}.
Pass@$k$ curves in Figures~\ref{fig:passk_4}, \ref{fig:pass1k}, \ref{fig:pass8k} further show higher low-$k$ accuracy without sacrificing high-$k$ performance; in particular, averaged over AIME24, AIME25, and AMC23 at $k{=}128$, our method improves Pass@128 by 7.0\% over NSR on Qwen3-4B.

Importantly, these benefits extend beyond mathematics, consistent with ResRL’s projection-residual reweighting that suppresses error-specific components while preserving shared prefixes.
On CodeForces benchmarks in Table~\ref{code_1}, ResRL achieves the top rating (1469.5), improving over NSR (1340.9) by 9.6\%, and increases percentile by 13.9\%.
On ALFWorld benchmark in Table~\ref{tab:alfworld}, it attains 86.7 overall success, surpassing PPO by 7.8\% and EMPG by 10.4\%.
On BFCL benchmark in Table~\ref{bfcl1}, ResRL delivers the best Multi-Turn OA (2.8\% over ResT) and improves Miss Func / Miss Param by 4.4\% and 6.3\%.

\begin{figure*}[h]
  \centering
  \includegraphics[width=0.95\textwidth]{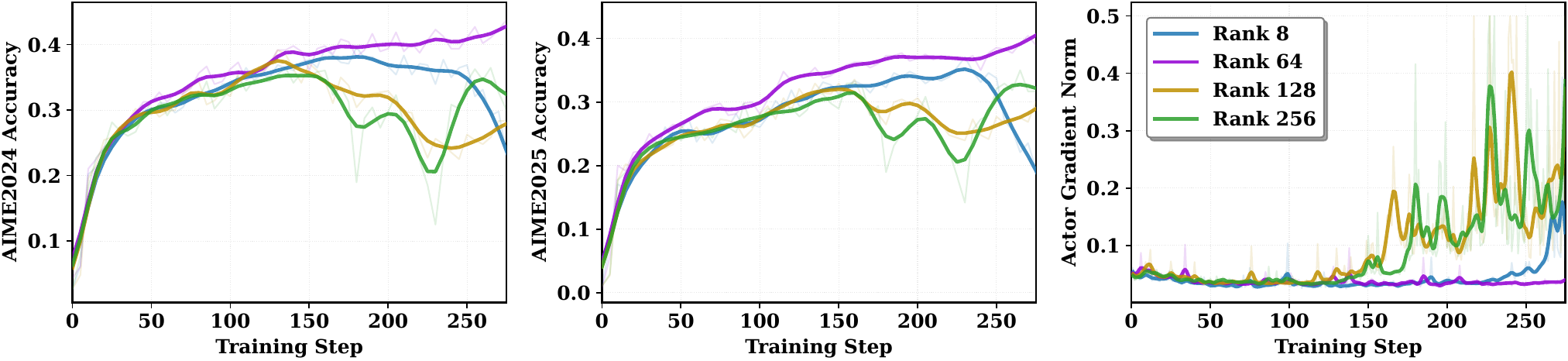} 
\caption{Impact of rank $k$ on model performance and optimization stability. (a) AIME2024 and (b) AIME2025 accuracy (Avg@16) curves across different ranks ($k=8$, $k=64$, $k=128$, $k=256$), demonstrating the protection-discrimination tradeoff. (c) Actor gradient norm highlighting the stability of updates, with larger ranks showing bursty gradients indicative of high variance.}
  \label{fig:rank_sweep}
  \vspace{-0.5em}
\end{figure*}

\subsection{Ablation Analysis}

\paragraph{Rank Selection.}
The rank $k$ sets a protection--discrimination tradeoff: larger $k$ expands the positive subspace $S$ and reduces residual energies $\mathcal{R}_{i,t}$, but overly large $S$ can also absorb error-specific directions and weaken discrimination (consistent with the anisotropic, effectively low-rank geometry of Transformer representations \citep{ethayarajh2019contextual,aghajanyan2021intrinsic}).

To validate this, we sweep $k$ on AIME24/25 in Figure~\ref{fig:rank_sweep}. An intermediate rank ($k{=}64$) is both the most accurate and the most stable. With $k{=}8$, $S$ under-covers shared semantics, so shared-but-negative tokens are over-penalized; with $k{\ge}128$, residual contrast collapses for many negatives (more tokens receive small $\mathcal{R}_{i,t}$ after normalization), leading to oscillatory updates and bursty gradient norms.

\paragraph{Hidden Layer Selection.}
We compare using the penultimate versus the final hidden layer for representation extraction. The penultimate layer consistently achieves higher accuracy on AIME 2024/2025 in Figure~\ref{fig:fp}, suggesting it provides a more stable semantic signal while being less entangled with the final layer's output-bound, next-token prediction bias. In addition, higher actor KL and entropy indicate broader but controlled exploration, allowing the policy to refine reasoning trajectories without prematurely collapsing to suboptimal trajectories.

\paragraph{Quantile Hyperparameter Selection.}
We study the quantile threshold in Equation~\ref{eq:Q} by sweeping $q\in\{0.1,0.2,0.3\}$ in Figure~\ref{fig:Q}. On AIME 2024/2025, stricter thresholds ($q{=}0.1$ or $0.2$) converge faster and reach higher accuracy than the more permissive $q{=}0.3$, consistent with stronger residual-based weighting. Lower $q$ also increases actor KL and entropy, indicating broader exploration; importantly, $q{=}0.1$ keeps gradient variance low, achieving exploration without destabilizing optimization.

\paragraph{Length-scaled Rewards.}
To test long-horizon training stability without an explicit KL penalty, we train ResRL (Qwen3-8B) for 800 steps in Figure~\ref{fig:long_run_stability} and apply a length-scaled discount to {positive} rewards: no change up to 3500 tokens, then linearly down to 70\% over 3500--4096. ResRL continues improving on AIME 2024/2025 with stable optimization (non-degenerate actor entropy and bounded, low-variance gradient norms). Meanwhile, KL increases smoothly while mean response length remains flat, suggesting the discount curbs length-based reward exploitation; overall, projection-based weighting stabilizes learning without KL, and length scaling serves as a lightweight safeguard against verbosity.

\paragraph{SVD Subspace Budget.}
ResRL estimates each group’s positive subspace $S$ from a subsample $X^+$ of at most $M_{\max}$ positive tokens, after the normalization in Definition~\ref{def:subspace}, and forms the rank-$k$ projector $P_S$. Since truncated SVD cost is correlated with $M_{\max}$, we cap $M_{\max}$ to bound overhead under long responses (4096 tokens) and grouped rollouts ($G{=}4$). Owing to local redundancy and low intrinsic dimensionality, the dominant directions of $X^+$ are recoverable from moderate subsamples \citep{zuo2025position,ethayarajh2019contextual,aghajanyan2021intrinsic}.

Sweeping $M_{\max}\in\{2048,4096,6144,8192\}$ in Figure~\ref{sub_sweep}, performance is robust for moderate budgets, with diminishing returns beyond $4096$. $M_{\max}{=}4096$ is consistently strong on AIME2024/2025 and yields stable optimization, whereas $M_{\max}{=}2048$ slightly lags, consistent with noisier subspace estimates and less reliable quantile-mapped weights $\omega_{i,t}$ under long responses. Increasing $M_{\max}$ further can compress residual contrast at fixed $k$, pushing $\omega_{i,t}$ toward its floor $\xi$ and weakening negative shaping (e.g., $M_{\max}{=}8192$ lowers KL but slows accuracy gains), while $M_{\max}{=}6144$ appears more susceptible to drift without accuracy benefit. We use $M_{\max}{=}4096$ by default.

\paragraph{LayerNorm Mechanism.}
We ablate the representation normalization applied before subspace projection (token-wise LayerNorm plus group-wise centering). Removing this stage sharply degrades reasoning accuracy on AIME 2024/2025 and destabilizes optimization, with high-variance gradient norms and irregular KL behavior in Figure~\ref{fig:noLN}. These results indicate that normalization is necessary to make residual signals comparable across tokens, preventing erratic updates and optimization collapse.

\paragraph{KL Penalty Analysis.}
KL regularization can stabilize GRPO but may overly constrain the exploration needed for long-horizon reasoning. In ResRL, the projection-based weight $\omega_{i,t}$ (Eq.~\ref{eq:weighted_adv}) acts as an intrinsic regularizer: it attenuates negative gradients for tokens aligned with the positive subspace (low $e(x^-)$), protecting valid reasoning steps without explicitly tethering updates to the SFT prior. Removing the KL term improves AIME2024 accuracy by $9\%$ while remaining stable in Figure~\ref{fig:kl_ablation}; the KL divergence still rises, indicating controlled drift for optimizing reasoning chains rather than the destructive gradient conflicts observed in unconstrained NSR.

\section{Conclusion}
We propose ResRL, aiming to improve reasoning without sacrificing generation diversity. ResRL is motivated by a theoretical connection between LLD and negative--positive gradient interference in NSR, and introduces a single-forward proxy metric that conservatively controls this interference via bounded representation alignment. ResRL leverages policy hidden states to represent token-level semantic distributions, constructs an efficient low-rank positive subspace via SVD, and reweights optimization using projection residuals so that negative updates primarily target error-specific components while preserving semantics shared with correct trajectories. Across twelve benchmarks spanning Mathematics, Code, Agent tasks, and Function calling, ResRL consistently improves both Pass@1 and Pass@$k$ over strong GRPO/NSR baselines while maintaining diversity; notably, it surpasses NSR on mathematical reasoning by 9.4\% in Avg@16 and 7.0\% in pass@128. These results validate the efficacy and scalability of ResRL in RLVR training.

\section*{Impact Statement}
This paper presents work whose goal is to advance the field of LLM Reasoning. There are many potential societal consequences of our work, none of which we feel must be specifically highlighted here.

\bibliography{example_paper}
\bibliographystyle{icml2026}

\newpage
\appendix
\onecolumn

\section{Proofs and Derivation Details for the Theoretical Framework}
\subsection{Proof of Lemma~\ref{lem:factor}}
\label{app:proof_lem_factor}

Consider the linear output head that maps a token representation $x\in\mathbb R^{d}$ to logits
\begin{equation}
z = Wx,\qquad W\in\mathbb R^{|\mathcal V|\times d},\quad z\in\mathbb R^{|\mathcal V|}.
\end{equation}
Let the token-wise loss be a differentiable function of logits, $\ell=\ell(z)$, and define the backprop signal at logits
\begin{equation}
\delta \;\coloneqq\; \nabla_z \ell \in \mathbb R^{|\mathcal V|}.
\end{equation}
When taking inner products between two matrices of the same shape, $\langle \cdot,\cdot\rangle$ denotes the Frobenius inner product:
\begin{equation}
\langle A,B\rangle \;\coloneqq\; \sum_{u=1}^{|\mathcal V|}\sum_{j=1}^{d} A_{uj}B_{uj}
\;=\; \mathrm{tr}(A^\top B).
\end{equation}
Here $\langle\cdot,\cdot\rangle$ denotes the Euclidean inner product for vectors and the Frobenius inner product for matrices. For vectors, $\langle a,b\rangle=a^\top b$ is the standard Euclidean inner product.

\begin{proof}[Proof of Lemma~\ref{lem:factor}]
We prove (i) $\nabla_W\ell=\delta x^\top$ and (ii) the factorization of the gradient inner product.

\paragraph{Derivation of $\nabla_W\ell=\delta x^\top$ (entry-wise chain rule).}
Write each logit coordinate explicitly:
\begin{equation}
z_u = (Wx)_u = \sum_{j=1}^{d} W_{uj}x_j,\qquad u\in\{1,\dots,|\mathcal V|\}.
\end{equation}
Fix an arbitrary entry $W_{ab}$ of $W$ (row $a$, column $b$). By the multivariate chain rule,
\begin{align}
\frac{\partial \ell}{\partial W_{ab}}
&= \sum_{u=1}^{|\mathcal V|}\frac{\partial \ell}{\partial z_u}\cdot\frac{\partial z_u}{\partial W_{ab}}
= \sum_{u=1}^{|\mathcal V|}\delta_u\cdot\frac{\partial}{\partial W_{ab}}
\left(\sum_{j=1}^{d}W_{uj}x_j\right).
\end{align}
Now compute $\frac{\partial z_u}{\partial W_{ab}}$. Because $\frac{\partial W_{uj}}{\partial W_{ab}}=\mathbf 1\{u=a\}\mathbf 1\{j=b\}$,
\begin{align}
\frac{\partial z_u}{\partial W_{ab}}
&= \sum_{j=1}^{d} x_j\frac{\partial W_{uj}}{\partial W_{ab}}
= \sum_{j=1}^{d} x_j\,\mathbf 1\{u=a\}\mathbf 1\{j=b\}
= \mathbf 1\{u=a\}x_b.
\end{align}
Substituting back,
\begin{equation}
\frac{\partial \ell}{\partial W_{ab}}
= \sum_{u=1}^{|\mathcal V|}\delta_u\,\mathbf 1\{u=a\}x_b
= \delta_a x_b.
\end{equation}
Since this holds for all $(a,b)$, the gradient matrix satisfies $(\nabla_W\ell)_{ab}=\delta_a x_b$, hence
\begin{equation}
\nabla_W\ell = \delta x^\top.
\end{equation}

\paragraph{Factorization of $\langle \nabla_W\ell_1,\nabla_W\ell_2\rangle$.}
Consider two token instances producing pairs $(\delta_1,x_1)$ and $(\delta_2,x_2)$, so that
\begin{equation}
\nabla_W\ell_1 = \delta_1 x_1^\top,\qquad \nabla_W\ell_2 = \delta_2 x_2^\top.
\end{equation}
Compute their Frobenius inner product directly by expanding the summation over all entries:
\begin{align}
\left\langle \nabla_W\ell_1,\nabla_W\ell_2\right\rangle
&= \sum_{u=1}^{|\mathcal V|}\sum_{j=1}^{d} (\nabla_W\ell_1)_{uj}\,(\nabla_W\ell_2)_{uj}
= \sum_{u=1}^{|\mathcal V|}\sum_{j=1}^{d} (\delta_{1,u}x_{1,j})(\delta_{2,u}x_{2,j}) \\
&= \sum_{u=1}^{|\mathcal V|}\left(\delta_{1,u}\delta_{2,u}\sum_{j=1}^{d}x_{1,j}x_{2,j}\right)
= \left(\sum_{u=1}^{|\mathcal V|}\delta_{1,u}\delta_{2,u}\right)\left(\sum_{j=1}^{d}x_{1,j}x_{2,j}\right) \\
&= \langle \delta_1,\delta_2\rangle\cdot \langle x_1,x_2\rangle,
\end{align}
which is exactly Eq.~\eqref{eq:grad_factor}. This completes the proof.
\end{proof}
\clearpage

\subsection{Proof of Eq.~\eqref{eq:pg_scaled_bound}}
\label{app:proof_eq_pg_scaled_bound}

In GRPO-style objectives, each token term is weighted by a scalar coefficient
(e.g., advantage, clipping-related multiplicative factors). Denote this coefficient by $A_{i,t}$.
The main text defines an effective per-token head update (with proportionality absorbing objective-dependent constants)
\begin{equation}
g_{i,t} \ \propto\ A_{i,t}\,\nabla_W\ell_{i,t}.
\end{equation}
We show that, for any two token instances,
\begin{equation}
|\langle g_1,g_2\rangle|
\ \propto\ 
|A_1A_2|\ |\langle \delta_1,\delta_2\rangle|\ |\langle x_1,x_2\rangle|,
\end{equation}
which is Eq.~\eqref{eq:pg_scaled_bound}. The key is to combine bilinearity of inner products with Lemma~\ref{lem:factor}.

\begin{proof}[Proof of Eq.~\eqref{eq:pg_scaled_bound}]
Take two token instances and suppress indices for readability:
\[
(\delta_1,x_1,A_1)\quad\text{and}\quad(\delta_2,x_2,A_2),
\]
where $\delta_k=\nabla_z\ell_k$ is the backprop signal at the logits and $x_k$ is the token representation feeding the output head.

\paragraph{Step 1: Pull out scalar weights using bilinearity.}
Because $\langle\cdot,\cdot\rangle$ is bilinear,
\begin{align}
\langle g_1,g_2\rangle
&\propto \left\langle A_1\nabla_W\ell_1,\ A_2\nabla_W\ell_2\right\rangle
= A_1A_2\left\langle \nabla_W\ell_1,\nabla_W\ell_2\right\rangle.
\label{eq:pull_A}
\end{align}
Taking absolute values yields
\begin{equation}
|\langle g_1,g_2\rangle|
\ \propto\
|A_1A_2|\ \left|\left\langle \nabla_W\ell_1,\nabla_W\ell_2\right\rangle\right|.
\label{eq:abs_pull_A}
\end{equation}

\paragraph{Step 2: Apply Lemma~\ref{lem:factor} (exact head factorization).}
Lemma~\ref{lem:factor} states that $\nabla_W\ell_k=\delta_k x_k^\top$ and that the head-gradient inner product factorizes as
\begin{equation}
\left\langle \nabla_W\ell_1,\nabla_W\ell_2\right\rangle
= \langle \delta_1,\delta_2\rangle\cdot \langle x_1,x_2\rangle.
\label{eq:use_lemma_factor}
\end{equation}
Substituting Eq.~\eqref{eq:use_lemma_factor} into Eq.~\eqref{eq:abs_pull_A} gives
\begin{equation}
|\langle g_1,g_2\rangle|
\ \propto\
|A_1A_2|\ |\langle \delta_1,\delta_2\rangle|\ |\langle x_1,x_2\rangle|,
\end{equation}
which is Eq.~\eqref{eq:pg_scaled_bound}.

\paragraph{Consequence for cross-sign pairs in a prompt group.}
Within a prompt group, group-normalized advantages induce positive- and negative-weighted tokens.
For a cross-sign pair $(x^-,x^+)$, Eq.~\eqref{eq:pg_scaled_bound} implies
\[
|\langle g^-,g^+\rangle|
\ \propto\
|A^-A^+|\ |\langle \delta^-,\delta^+\rangle|\ |\langle x^-,x^+\rangle|.
\]
Therefore, controlling the cross-sign representation similarity $|\langle x^-,x^+\rangle|$
provides direct leverage over head-gradient interference up to the multiplicative factors
$|A^-A^+|$ and $|\langle \delta^-,\delta^+\rangle|$, motivating a single-forward proxy that upper-bounds
$|\langle x^-,x^+\rangle|$ in the main text.
\end{proof}

\clearpage
\subsection{Details for Definition~\ref{def:subspace} (Positive subspace construction)}
\label{app:subspace_details}

Within each prompt group, we approximate the geometry of \emph{positive-token} representations by a low-rank subspace.
This supplies a compact reference set for measuring whether a token (in particular, a negative token) aligns with dominant
positive directions.

\paragraph{Step 1: Token-wise LayerNorm and centering.}
Let $h\in\mathbb R^d$ be a raw hidden state.
Token-wise Layer Normalization computes per-token feature statistics
\begin{equation}
\mu(h)=\frac{1}{d}\sum_{j=1}^d h_j,\qquad
\sigma^2(h)=\frac{1}{d}\sum_{j=1}^d\big(h_j-\mu(h)\big)^2,
\end{equation}
and outputs
\begin{equation}
\mathrm{LN}(h)
=
\gamma\odot \frac{h-\mu(h)\mathbf 1}{\sqrt{\sigma^2(h)+\epsilon}}+\beta,
\label{eq:ln_affine}
\end{equation}
where $\gamma,\beta\in\mathbb R^d$ are learned affine parameters, $\odot$ is elementwise multiplication,
$\mathbf 1\in\mathbb R^d$ is the all-ones vector, and $\epsilon>0$ is a small constant.
(When $\gamma,\beta$ are omitted in the main text for brevity, the construction and the subsequent linear-algebraic results
remain unchanged because $\mathrm{LN}(h)$ is still a deterministic map producing a vector in $\mathbb R^d$.)

Given the positive-token set $\mathcal P$ in the same prompt group, define the positive mean
\begin{equation}
\mu^+ \;\coloneqq\; \frac{1}{|\mathcal P|}\sum_{h'\in\mathcal P}\mathrm{LN}(h') \;\in\; \mathbb R^d.
\end{equation}
For any token $h$ (positive or negative), we form the centered representation
\begin{equation}
\tilde h=\mathrm{LN}(h),\qquad x=\tilde h-\mu^+.
\label{eq:center_def}
\end{equation}
Centering ensures that the subspace we estimate from positives captures \emph{directions of variation} among positives
within the prompt group, rather than being dominated by a shared mean offset.

\paragraph{Step 2: Construct the positive matrix $X^+$.}
For each positive token $h\in\mathcal P$, compute $x=\mathrm{LN}(h)-\mu^+$ as in \eqref{eq:center_def}.
Stack these centered positive vectors as rows to form
\begin{equation}
X^+ \in \mathbb R^{|\mathcal P|\times d},\qquad
X^+_{m:} = x_m^\top,
\end{equation}
where $x_m\in\mathbb R^d$ is the $m$-th centered positive representation.

\paragraph{Step 3: PCA objective and equivalence to truncated SVD.}
Define the (uncentered) empirical covariance of the centered positives
\begin{equation}
C \;\coloneqq\; \frac{1}{|\mathcal P|}(X^+)^\top X^+ \in \mathbb R^{d\times d}.
\end{equation}
A standard characterization of PCA is that the top-$k$ principal subspace solves
\begin{equation}
\max_{V\in\mathbb R^{d\times k}}\ \mathrm{tr}(V^\top C V)
\quad\text{s.t.}\quad V^\top V = I_k,
\label{eq:pca_trace}
\end{equation}
i.e., it maximizes the variance captured by projecting onto $\mathrm{span}(V)$.
The optimizer $V_k$ is given by the top-$k$ eigenvectors of $C$.

To connect this to the truncated SVD used in Definition~\ref{def:subspace}, take an SVD of $X^+$:
\begin{equation}
X^+ = U\Sigma V^\top,
\end{equation}
where $U\in\mathbb R^{|\mathcal P|\times r}$, $V\in\mathbb R^{d\times r}$ have orthonormal columns,
$\Sigma\in\mathbb R^{r\times r}$ is diagonal with singular values, and $r=\mathrm{rank}(X^+)$.
Then
\begin{equation}
C=\frac{1}{|\mathcal P|}(X^+)^\top X^+
= \frac{1}{|\mathcal P|}V\Sigma^2 V^\top.
\label{eq:cov_svd}
\end{equation}
Hence the eigenvectors of $C$ are exactly the right singular vectors of $X^+$,
and the top-$k$ eigenvectors of $C$ correspond to the top-$k$ right singular vectors of $X^+$.
Equivalently, writing the rank-$k$ truncated SVD $X^+\approx U_k\Sigma_k V_k^\top$,
the matrix $V_k\in\mathbb R^{d\times k}$ in Definition~\ref{def:subspace} is precisely the solution to \eqref{eq:pca_trace}.

\paragraph{Step 4: Positive subspace and orthogonal projector.}
Define the positive subspace
\begin{equation}
S \;=\; \mathrm{span}(V_k).
\end{equation}
When $V_k$ has orthonormal columns ($V_k^\top V_k=I_k$), the matrix
\begin{equation}
P_S \;\coloneqq\; V_kV_k^\top \in \mathbb R^{d\times d}
\end{equation}
is the orthogonal projector onto $S$:
\begin{equation}
P_S^\top = P_S,\qquad P_S^2 = P_S.
\end{equation}
For any $x\in\mathbb R^d$, the decomposition
\begin{equation}
x = P_S x + (I-P_S)x
\end{equation}
splits $x$ into its component in the positive subspace and its orthogonal complement, which is the geometric basis
for the orthogonal-complement energy defined next in Definition \ref{def:energy}.

\clearpage

\subsection{Details for Definition~\ref{def:energy} (Orthogonal-complement energy)}
\label{app:energy_details}

Definition~\ref{def:energy} introduces a scalar statistic $e(x)$ that quantifies how much a token representation
deviates from the positive subspace $S=\mathrm{span}(V_k)$ constructed from the positive tokens in the same prompt group
(Definition~\ref{def:subspace}). This appendix section formalizes the geometric meaning of $e(x)$ and records basic
properties that are implicitly used later (e.g., in connecting subspace alignment to representation similarity and
gradient interference bounds).

Let $V_k\in\mathbb R^{d\times k}$ have orthonormal columns ($V_k^\top V_k=I_k$), and let
\begin{equation}
P_S \;\coloneqq\; V_kV_k^\top \in \mathbb R^{d\times d}
\end{equation}
be the orthogonal projector onto $S=\mathrm{span}(V_k)$.
Define the complementary projector
\begin{equation}
P_{S^\perp} \;\coloneqq\; I-P_S.
\end{equation}
For any $x\in\mathbb R^d$, Definition~\ref{def:energy} is
\begin{equation}
e(x)\ \triangleq\ \frac{1}{d}\big\| (I-P_S)x\big\|_2^2
\;=\;\frac{1}{d}\|P_{S^\perp}x\|_2^2.
\label{eq:energy_repeat}
\end{equation}

\paragraph{Step 1: $P_S$ is an orthogonal projector and yields a Pythagorean decomposition.}
Because $V_k^\top V_k=I_k$, $P_S$ is symmetric and idempotent:
\begin{equation}
P_S^\top=(V_kV_k^\top)^\top = V_kV_k^\top=P_S,\qquad
P_S^2 = V_k(V_k^\top V_k)V_k^\top = V_kI_kV_k^\top=P_S.
\end{equation}
Thus $P_S$ is the orthogonal projector onto $S$, and $P_{S^\perp}=I-P_S$ is the orthogonal projector onto $S^\perp$.
In particular, for any $x$,
\begin{equation}
x = P_Sx + P_{S^\perp}x,
\qquad
\langle P_Sx,\; P_{S^\perp}x\rangle = 0.
\label{eq:orth_decomp}
\end{equation}
The orthogonality in \eqref{eq:orth_decomp} implies the Pythagorean identity
\begin{equation}
\|x\|_2^2 = \|P_Sx\|_2^2 + \|P_{S^\perp}x\|_2^2.
\label{eq:pythagoras}
\end{equation}
Therefore $e(x)$ is exactly the \emph{normalized squared length} of the component of $x$ lying in $S^\perp$.

\paragraph{Step 2: Nonnegativity, invariances, and an explicit coordinate form.}
Since $e(x)$ is a squared norm scaled by $1/d$,
\begin{equation}
e(x)\ge 0,\qquad e(x)=0 \iff (I-P_S)x=0 \iff x\in S.
\label{eq:energy_nonneg_zero}
\end{equation}
Moreover, using $P_S=V_kV_k^\top$, we can rewrite the residual norm in a form that makes the geometry explicit:
\begin{align}
\|(I-P_S)x\|_2^2
&= x^\top (I-P_S)^\top(I-P_S)x
= x^\top (I-P_S)^2 x
= x^\top (I-P_S)x
\label{eq:residual_quadratic}
\\
&= x^\top x - x^\top P_S x
= \|x\|_2^2 - x^\top V_kV_k^\top x
= \|x\|_2^2 - \|V_k^\top x\|_2^2.
\label{eq:energy_expand}
\end{align}
Combining \eqref{eq:energy_repeat} and \eqref{eq:energy_expand},
\begin{equation}
e(x) = \frac{1}{d}\Big(\|x\|_2^2 - \|V_k^\top x\|_2^2\Big).
\label{eq:energy_proj_coeff}
\end{equation}
Interpretation: $\|V_k^\top x\|_2^2$ is the squared length captured by the top-$k$ positive directions,
while the residual $\|x\|_2^2-\|V_k^\top x\|_2^2$ measures what remains in directions orthogonal to positives.

\paragraph{Step 3: Distance-to-subspace interpretation.}
A key geometric fact is that orthogonal projection yields the closest point in a subspace under $\ell_2$ distance:
\begin{equation}
P_Sx = \arg\min_{s\in S}\|x-s\|_2.
\label{eq:projection_minimizer}
\end{equation}
Consequently, the residual vector $(I-P_S)x$ is the displacement from $x$ to its closest point in $S$, and
\begin{equation}
\|(I-P_S)x\|_2 = \min_{s\in S}\|x-s\|_2,
\qquad
e(x)=\frac{1}{d}\Big(\min_{s\in S}\|x-s\|_2\Big)^2.
\label{eq:energy_distance}
\end{equation}
This formally justifies the main-text intuition: $e(x)$ is small precisely when $x$ lies close to $S$,
and large when $x$ has a substantial component in $S^\perp$.

\paragraph{Step 4: Normalized by $d$.}
The factor $1/d$ in \eqref{eq:residual_energy} makes $e(x)$ an average per-dimension squared residual.
This is convenient for (i) comparability across models or layers with different hidden sizes,
and (ii) keeping the magnitude of $e(x)$ stable as $d$ varies (e.g., when scaling model width).
Formally, if the residual component has isotropic per-coordinate variance on the order of a constant, then
$\|(I-P_S)x\|_2^2$ scales as $\Theta(d)$, while $e(x)$ remains $\Theta(1)$.

\paragraph{Step 5: A useful upper bound relating residual energy to projection alignment.}
Equation \eqref{eq:energy_proj_coeff} immediately yields the bound
\begin{equation}
\|V_k^\top x\|_2^2 \le \|x\|_2^2
\quad\Longrightarrow\quad
0 \le e(x) \le \frac{1}{d}\|x\|_2^2.
\label{eq:energy_range}
\end{equation}
When representations are LayerNormed (Definition~\ref{def:subspace}), $\|x\|_2^2$ tends to be better controlled,
making $e(x)$ a stable scalar summary of ``deviation from positives'' within a prompt group.

\clearpage
\subsection{Proof of Lemma~\ref{lem:residual_bound} (Alignment bound)}
\label{app:proof_residual_bound}

For any $x^+\in S$ and any $x\in\mathbb R^d$,
\begin{equation}
\label{eq:alignment_bound_app}
\begin{aligned}
\langle x, x^+\rangle^2
&\le \|x^+\|_2^2 \left(\|x\|_2^2 - \|(I-P_S)x\|_2^2\right) \\
&= \|x^+\|_2^2 \left(\|x\|_2^2 - d\,e(x)\right).
\end{aligned}
\end{equation}

Let $S=\mathrm{span}(V_k)\subseteq \mathbb R^d$ be the positive subspace from Definition~\ref{def:subspace}.
Let $P_S$ be the orthogonal projector onto $S$ (so $P_S=V_kV_k^\top$ with $V_k^\top V_k=I_k$),
and let $P_{S^\perp}=I-P_S$ be the orthogonal projector onto the orthogonal complement $S^\perp$.
For any $x\in\mathbb R^d$, define the orthogonal decomposition
\begin{equation}
x = x_S + x_\perp,\qquad x_S \coloneqq P_S x\in S,\quad x_\perp \coloneqq (I-P_S)x\in S^\perp.
\label{eq:decomp_x}
\end{equation}
By properties of orthogonal projections, $x_S\perp x_\perp$ and
\begin{equation}
\|x\|_2^2 = \|x_S\|_2^2 + \|x_\perp\|_2^2.
\label{eq:pythagoras_app}
\end{equation}

\begin{proof}[Proof of Lemma~\ref{lem:residual_bound}]
Fix any $x^+\in S$ and any $x\in\mathbb R^d$.

\paragraph{Step 1: Reduce $\langle x,x^+\rangle$ to the in-subspace component of $x$.}
Using the decomposition \eqref{eq:decomp_x} and linearity of the inner product,
\begin{align}
\langle x,x^+\rangle
&= \langle x_S + x_\perp,\ x^+\rangle
= \langle x_S,\ x^+\rangle + \langle x_\perp,\ x^+\rangle.
\label{eq:split_inner}
\end{align}
Since $x_\perp\in S^\perp$ and $x^+\in S$, we have $\langle x_\perp,x^+\rangle=0$.
Therefore,
\begin{equation}
\langle x,x^+\rangle = \langle x_S,x^+\rangle = \langle P_S x,\ x^+\rangle.
\label{eq:reduce_to_PS}
\end{equation}

\paragraph{Step 2: Apply Cauchy--Schwarz within the subspace.}
By Cauchy--Schwarz in $\mathbb R^d$,
\begin{equation}
\langle x_S,x^+\rangle^2 \le \|x_S\|_2^2\ \|x^+\|_2^2.
\label{eq:cs_sub}
\end{equation}
Combining \eqref{eq:reduce_to_PS} and \eqref{eq:cs_sub} yields
\begin{equation}
\langle x,x^+\rangle^2 \le \|x^+\|_2^2\ \|P_S x\|_2^2.
\label{eq:bound_by_proj_norm}
\end{equation}

\paragraph{Step 3: Rewrite $\|P_S x\|_2^2$ using the residual $\|(I-P_S)x\|_2^2$.}
From the Pythagorean identity \eqref{eq:pythagoras_app},
\begin{equation}
\|x\|_2^2 = \|P_S x\|_2^2 + \|(I-P_S)x\|_2^2
\quad\Longrightarrow\quad
\|P_S x\|_2^2 = \|x\|_2^2 - \|(I-P_S)x\|_2^2.
\label{eq:proj_residual_relation}
\end{equation}
Substituting \eqref{eq:proj_residual_relation} into \eqref{eq:bound_by_proj_norm} gives
\begin{equation}
\langle x,x^+\rangle^2
\le
\|x^+\|_2^2\left(\|x\|_2^2 - \|(I-P_S)x\|_2^2\right),
\end{equation}
which is the first line of \eqref{eq:alignment_bound_app}.

\paragraph{Step 4: Express the bound via the residual energy $e(x)$.}
By Definition~\ref{def:energy}, $e(x)=\frac{1}{d}\|(I-P_S)x\|_2^2$, equivalently
\begin{equation}
\|(I-P_S)x\|_2^2 = d\,e(x).
\end{equation}
Substituting into the previous inequality yields the second line of \eqref{eq:alignment_bound_app}.
This completes the proof.
\end{proof}

The bound is tight: equality holds whenever $x_\perp=0$ (i.e., $x\in S$) and $x_S$ is colinear with $x^+$.
Geometrically, the lemma states that alignment with any positive direction $x^+\in S$ is controlled by the amount of
energy of $x$ that lies \emph{inside} $S$; the residual energy in $S^\perp$ (equivalently $e(x)$) subtracts from the
maximum achievable squared inner product.
\clearpage
\subsection{Proof of Theorem~\ref{thm:proxy_grad} (Residual proxies gradient alignment)}
\label{app:proof_proxy_grad}

A token representation $x\in\mathbb R^{d}$ immediately before the output head
produces logits via a linear map $z=Wx$ (possibly with tied weights), and the token loss is $\ell=\ell(z)$.
Let $\delta \coloneqq \nabla_z \ell\in\mathbb R^{|\mathcal V|}$ denote the backprop signal at the logits.
In GRPO-style objectives, each token term is multiplied by a scalar coefficient (advantage, clipping-induced factor, etc.);
we denote this coefficient by $A_{i,t}$ and write the effective per-token head gradient as
\begin{equation}
g_{i,t}\ \propto\ A_{i,t}\,\nabla_W \ell_{i,t}.
\end{equation}
For a prompt group, we construct the positive subspace $S=\mathrm{span}(V_k)$ and its orthogonal projector
$P_S=V_kV_k^\top$ (Definition~\ref{def:subspace}), and define the orthogonal-complement energy
$e(x)\coloneqq \frac{1}{d}\|(I-P_S)x\|_2^2$ (Definition~\ref{def:energy}).
Standard facts used below include: (i) properties of orthogonal projections and Pythagorean decompositions, and
(ii) Cauchy--Schwarz and the triangle inequality; see \citep{golub2013matrix,horn2012matrix} for projection geometry in Euclidean spaces.

\begin{proof}[Proof of Theorem~\ref{thm:proxy_grad}]
Fix any negative/positive token pair $(x^-,x^+)$ within the same prompt group, with corresponding
coefficients $(A^-,A^+)$ and logit-space signals $(\delta^-,\delta^+)$.

\paragraph{Step 1: exact head-gradient factorization.}
By Lemma~\ref{lem:factor} (Gradient inner-product decomposition), for each token $\nabla_W\ell=\delta x^\top$ and
\begin{equation}
\langle \nabla_W\ell^-,\nabla_W\ell^+\rangle
=
\langle \delta^-,\delta^+\rangle\cdot \langle x^-,x^+\rangle.
\end{equation}
Using $g^\pm \propto A^\pm \nabla_W\ell^\pm$ and bilinearity of the inner product,
\begin{equation}
\langle g^-,g^+\rangle
\ \propto\
A^-A^+\,\langle \nabla_W\ell^-,\nabla_W\ell^+\rangle
=
A^-A^+\,\langle \delta^-,\delta^+\rangle\,\langle x^-,x^+\rangle.
\end{equation}
Taking absolute values yields
\begin{equation}
\label{eq:proxy_to_grad_compact}
|\langle g^-,g^+\rangle|
\ \propto\
|A^-A^+|\ |\langle \delta^-,\delta^+\rangle|\ |\langle x^-,x^+\rangle|.
\end{equation}

\paragraph{Step 2: bounding $|\langle x^-,x^+\rangle|$ by residual energies (Eq.~\eqref{eq:proxy_to_repr_compact}).}
Define the decomposition of the positive token representation into components parallel and orthogonal to $S$:
\begin{equation}
x^+_{\parallel}\ \coloneqq\ P_S x^+ \in S,
\qquad
x^+_{\perp}\ \coloneqq\ (I-P_S)x^+ \in S^\perp,
\qquad
x^+=x^+_{\parallel}+x^+_{\perp}.
\label{eq:xp_decomp}
\end{equation}
Using linearity of the inner product,
\begin{equation}
\langle x^-,x^+\rangle
=
\langle x^-,x^+_{\parallel}\rangle+\langle x^-,x^+_{\perp}\rangle.
\end{equation}
Applying the triangle inequality yields
\begin{equation}
|\langle x^-,x^+\rangle|
\le
|\langle x^-,x^+_{\parallel}\rangle|
+
|\langle x^-,x^+_{\perp}\rangle|.
\label{eq:tri_split}
\end{equation}

We bound the two terms in \eqref{eq:tri_split} separately.

\smallskip
\noindent\textbf{(a) Subspace-alignment term $|\langle x^-,x^+_{\parallel}\rangle|$.}
Since $x^+_{\parallel}\in S$ by construction, we may apply Lemma~\ref{lem:residual_bound} (Alignment bound)
with $x=x^-$ and $x^+=x^+_{\parallel}$:
\begin{equation}
\langle x^-,x^+_{\parallel}\rangle^2
\le
\|x^+_{\parallel}\|_2^2\Big(\|x^-\|_2^2-\|(I-P_S)x^-\|_2^2\Big).
\label{eq:apply_lem34}
\end{equation}
The right-hand side is nonnegative because orthogonal projection cannot increase norm:
$\|(I-P_S)x^-\|_2^2 \le \|x^-\|_2^2$ (a standard property of orthogonal projectors; see
\citep{horn2012matrix}).
Taking square roots on both sides of \eqref{eq:apply_lem34} gives
\begin{equation}
|\langle x^-,x^+_{\parallel}\rangle|
\le
\|x^+_{\parallel}\|_2\sqrt{\|x^-\|_2^2-\|(I-P_S)x^-\|_2^2}.
\label{eq:term_parallel}
\end{equation}
Finally, by Definition~\ref{def:energy}, $\|(I-P_S)x^-\|_2^2=d\,e(x^-)$, so
\begin{equation}
|\langle x^-,x^+_{\parallel}\rangle|
\le
\|x^+_{\parallel}\|_2\sqrt{\|x^-\|_2^2-d\,e(x^-)}.
\label{eq:term_parallel_energy}
\end{equation}

\smallskip
\noindent\textbf{(b) Orthogonal-residual term $|\langle x^-,x^+_{\perp}\rangle|$.}
Apply Cauchy--Schwarz in $\mathbb R^d$:
\begin{equation}
|\langle x^-,x^+_{\perp}\rangle|
\le
\|x^-\|_2\ \|x^+_{\perp}\|_2.
\label{eq:cs_perp}
\end{equation}
Using $x^+_{\perp}=(I-P_S)x^+$ and Definition~\ref{def:energy}, we have
\begin{equation}
\|x^+_{\perp}\|_2
=
\|(I-P_S)x^+\|_2
=
\sqrt{\|(I-P_S)x^+\|_2^2}
=
\sqrt{d\,e(x^+)}.
\label{eq:perp_norm_energy}
\end{equation}
Substituting \eqref{eq:perp_norm_energy} into \eqref{eq:cs_perp} yields
\begin{equation}
|\langle x^-,x^+_{\perp}\rangle|
\le
\|x^-\|_2\sqrt{d\,e(x^+)}.
\label{eq:term_perp_energy}
\end{equation}

\smallskip
\noindent\textbf{(c) Combine (a) and (b).}
Plugging \eqref{eq:term_parallel_energy} and \eqref{eq:term_perp_energy} into \eqref{eq:tri_split} gives
\begin{equation}
\label{eq:proxy_to_repr_reprove}
|\langle x^-,x^+\rangle|
\le
\|x^+_{\parallel}\|_2\,\sqrt{\|x^-\|_2^2-d\,e(x^-)}
\;+\;
\|x^-\|_2\,\sqrt{d\,e(x^+)},
\end{equation}
which is Eq.~\eqref{eq:proxy_to_repr_compact}.

\paragraph{Step 3: monotonicity in $e(x^-)$ for fixed $x^+$.}
Fix $x^+$ (hence $x^+_{\parallel}=P_Sx^+$ is fixed). Consider the first term on the right-hand side of
\eqref{eq:proxy_to_repr_reprove}:
\begin{equation}
T(e(x^-))\ \coloneqq\ \|x^+_{\parallel}\|_2\,\sqrt{\|x^-\|_2^2-d\,e(x^-)}.
\end{equation}
Because $d\,e(x^-)=\|(I-P_S)x^-\|_2^2\in[0,\|x^-\|_2^2]$, the quantity under the square root lies in $[0,\|x^-\|_2^2]$.
Moreover, the map $u\mapsto \sqrt{\|x^-\|_2^2-u}$ is strictly decreasing on $u\in[0,\|x^-\|_2^2]$,
so $T(e(x^-))$ is monotonically non-increasing in $e(x^-)$.

\paragraph{Step 4: a conservative proxy under positive-subspace capture.}
Assume positives are well captured by $S$ such that $e(x^+)\le \varepsilon_+$ holds for most positive tokens.
Then $\sqrt{d\,e(x^+)}\le \sqrt{d\,\varepsilon_+}$, and \eqref{eq:proxy_to_repr_reprove} implies
\begin{equation}
|\langle x^-,x^+\rangle|
\le
\|x^+_{\parallel}\|_2\,\sqrt{\|x^-\|_2^2-d\,e(x^-)}
\;+\;
\|x^-\|_2\,\sqrt{d\,\varepsilon_+}.
\end{equation}
The first term is the $e(x^-)$-dependent (monotonically decreasing)
subspace-alignment term, while the second term is an additive approximation error that depends only on how well positives
are captured by $S$. Combining this inequality with Eq.~\eqref{eq:proxy_to_grad_compact} shows that $e(x^-)$ serves as a
conservative proxy for worst-case gradient interference up to the additive error induced by imperfect positive-subspace capture,
and up to the multiplicative logit-space similarity factor $|\langle \delta^-,\delta^+\rangle|$.
\end{proof}

\clearpage
\section{Algorithm Design}
\begin{algorithm}[H]
\caption{ResRL (per prompt group)}
\label{alg:pswnr}
\begin{algorithmic}[1]
\REQUIRE Prompt-group trajectories $\{y_i\}_{i=1}^G$ with lengths $\{T_i\}$ and tokens $y_{i,t}$;
group-normalized advantages $\{\hat A_i\}$;
penultimate-layer hidden states $\{h_{i,t}\in\mathbb{R}^d\}$;
validity mask $m_{i,t}\in\{0,1\}$ (1 for non-padding tokens);
rank $k$; positive-token budget $M_{\max}$;
quantiles $(\alpha,\beta)$ with $0<\alpha<\beta<1$;
negative penalty floor $\xi\in(0,1)$;
stabilizer $\varepsilon>0$;
positive scaling $\lambda_+>0$;
(optional) truncation-tail mask $\tau_{i,t}\in\{0,1\}$.
\ENSURE Token-wise coefficients $\{\tilde A_{i,t}\}$ for optimizing $\mathcal{L}_{\text{ResRL}}$.

\STATE Split rollouts: $\mathcal{P}\gets\{i:\hat A_i>0\}$, $\mathcal{N}\gets\{i:\hat A_i\le 0\}$.
\IF{$|\mathcal{P}|=0$}
    \FOR{each valid token $(i,t)$ with $m_{i,t}=1$}
        \STATE $\tilde A_{i,t}\gets \hat A_i$.
    \ENDFOR
    \STATE Optimize the baseline clipped GRPO objective using $\tilde A_{i,t}$ (no reweighting).
    \STATE \textbf{return}
\ENDIF

\STATE $\mathcal{I}^+\gets \textsc{BoundaryAwareSample}(\{(i,t): i\in\mathcal{P},\, m_{i,t}=1\},\, M_{\max})$; $M\gets|\mathcal{I}^+|$.
\STATE Compute LayerNormed positives $\tilde h_{i,t}\gets \mathrm{LN}(h_{i,t})$ for all $(i,t)\in\mathcal{I}^+$.
\STATE Positive mean: $\mu^+\gets \frac{1}{M}\sum_{(i,t)\in\mathcal{I}^+}\tilde h_{i,t}$.
\STATE Form $X^+\in\mathbb{R}^{M\times d}$ with rows $x^+_{i,t}\gets \tilde h_{i,t}-\mu^+$.

\STATE Compute top-$k$ principal directions (rank-$k$ truncated SVD / PCA) of $X^+$:
$V_k\in\mathbb{R}^{d\times k}$ with $V_k^\top V_k=I_k$; set projector $P_S\gets V_kV_k^\top$.

\FOR{each negative token $(i,t)$ with $i\in\mathcal{N}$ and $m_{i,t}=1$}
    \STATE $\tilde h_{i,t}\gets \mathrm{LN}(h_{i,t})$; $x^-_{i,t}\gets \tilde h_{i,t}-\mu^+$.
    \STATE Residual energy $R_{i,t}\gets \frac{1}{d}\left\|(I-P_S)x^-_{i,t}\right\|_2^2$.
\ENDFOR

\STATE Collect residuals $\mathcal{D}\gets\{R_{i,t}: i\in\mathcal{N},\, m_{i,t}=1\}$.
\STATE Quantiles: $q_{\text{low}}\gets \mathcal{Q}(\mathcal{D},\alpha)$, $q_{\text{high}}\gets \mathcal{Q}(\mathcal{D},\beta)$.

\FOR{each negative token $(i,t)$ with $i\in\mathcal{N}$ and $m_{i,t}=1$}
    \STATE $z_{i,t}\gets \mathrm{clamp}\!\left(\frac{R_{i,t}-q_{\text{low}}}{(q_{\text{high}}-q_{\text{low}})+\varepsilon},\,0,\,1\right)$.
    \STATE $\omega_{i,t}\gets \xi+(1-\xi)z_{i,t}$.
    \IF{truncation guard enabled and $\tau_{i,t}=1$}
        \STATE $\omega_{i,t}\gets 1$.
    \ENDIF
\ENDFOR

\FOR{each valid token $(i,t)$ with $m_{i,t}=1$}
    \IF{$\hat A_i>0$}
        \STATE $\tilde A_{i,t}\gets \lambda_+\hat A_i$ 
    \ELSE
        \STATE $\tilde A_{i,t}\gets \omega_{i,t}\hat A_i$.
    \ENDIF
\ENDFOR

\STATE Optimize the clipped GRPO objective with $\hat A_i$ replaced by $\tilde A_{i,t}$, plus KL penalty if used.

\STATE \textbf{Subroutine} $\textsc{BoundaryAwareSample}(\cdot)$: retain head and tail tokens to preserve the logical arc,
and uniformly subsample middle tokens to ensure a cap $M\le M_{\max}$ (as described in Sec.~\ref{sec:mcwnr_weighting}).
\end{algorithmic}
\end{algorithm}
\clearpage
\section{Time Complexity of Gradient-Inner-Product Modules}
\label{app:complexity_gip}

This appendix analyzes the per-prompt-group time complexity of the gradient-inner-product based modules used in
\textsc{ResRL} (ours) and in \textsc{LLD}/\textsc{NTHR}. Throughout, we use standard Big-$\mathcal{O}$ notation and count
floating-point operations up to constant factors.

\paragraph{Common notation.}
A prompt group consists of $G$ sampled trajectories $\{y_i\}_{i=1}^{G}$ with lengths $\{T_i\}$ and tokens $y_{i,t}$.
We denote the penultimate-layer hidden state at token $(i,t)$ by $h_{i,t}\in\mathbb{R}^{d}$, where $d$ is the hidden size,
and use a validity mask $m_{i,t}\in\{0,1\}$ to ignore padding tokens.
Let $W\in\mathbb{R}^{|V|\times d}$ be the (token) unembedding matrix and $|V|$ be the vocabulary size.

\subsection{ResRL: Residual-based proxy for head-gradient interference}
\label{app:complexity_resrl}

\paragraph{Module overview.}
\textsc{ResRL} constructs, per prompt group, a rank-$k$ positive subspace from (a subsample of) positive tokens and then
computes the projection residual energy $R_{i,t}$ for each negative token to produce token-wise weights.
We analyze the additional overhead beyond the baseline GRPO forward/backward passes.

\paragraph{Step 1: forming the positive matrix.}
Let $P=\{i:\widehat{A}_i>0\}$ be the set of positive trajectories and let $I^{+}$ be the sampled positive token indices
with $M := |I^{+}|\le M_{\max}$. After LayerNorm and group-wise centering, the method forms
$X^{+}\in\mathbb{R}^{M\times d}$ by stacking $M$ centered positive vectors.
This costs $\mathcal{O}(Md)$ time and $\mathcal{O}(Md)$ memory.

\paragraph{Step 2: rank-$k$ truncated SVD / PCA.}
Computing the top-$k$ principal directions of $X^{+}$ (equivalently, the rank-$k$ truncated SVD/PCA) costs
\begin{equation}
\mathcal{O}(Mdk)
\end{equation}
time using standard iterative methods (e.g., Lanczos / randomized SVD), and stores $V_k\in\mathbb{R}^{d\times k}$ with
$\mathcal{O}(dk)$ memory. (A full SVD would be higher order and is unnecessary here.)

\paragraph{Step 3: residual energies for negative tokens.}
Let $N=\{i:\widehat{A}_i\le 0\}$ be the set of negative trajectories and let
\[
T_{-}\;:=\;\bigl|\{(i,t): i\in N,\; m_{i,t}=1\}\bigr|
\]
be the number of valid negative tokens in the group.
For each negative token, \textsc{ResRL} computes the centered vector $x^{-}_{i,t}\in\mathbb{R}^d$ and the residual energy
\[
R_{i,t} \;=\; \frac{1}{d}\left\lVert (I-P_S)\,x^{-}_{i,t}\right\rVert_2^2,
\qquad P_S := V_k V_k^{\top}.
\]
Applying $P_S$ to a vector can be implemented as $V_k(V_k^{\top}x)$, which costs $\mathcal{O}(dk)$ per token.
Thus computing $\{R_{i,t}\}$ costs
\begin{equation}
\mathcal{O}(T_{-}dk).
\end{equation}

\paragraph{Step 4: quantiles and weight mapping.}
Let $D=\{R_{i,t}: i\in N,\;m_{i,t}=1\}$ be the multiset of residuals.
Computing the $\alpha$- and $\beta$-quantiles can be done in expected linear time via selection:
$\mathcal{O}(T_{-})$ (or $\mathcal{O}(T_{-}\log T_{-})$ if implemented by sorting).
The subsequent per-token mapping (clamp and affine transform) costs $\mathcal{O}(T_{-})$.

\paragraph{Total per-group overhead (ResRL).}
Combining the above, the additional time cost per prompt group is
\begin{equation}
\boxed{
\mathcal{O}\bigl(Mdk + T_{-}dk + T_{-}\bigr)
\;=\;
\mathcal{O}\bigl((M+T_{-})dk\bigr),
\quad \text{with } M\le M_{\max}.
}
\end{equation}
The extra memory is dominated by storing $X^{+}$ and $V_k$, i.e.,
\begin{equation}
\boxed{
\mathcal{O}(Md + dk).
}
\end{equation}
In practice, $M_{\max}$ caps the SVD cost and makes the overhead predictable under long rollouts.

\subsection{LLD/NTHR: Gradient-inner-product score and efficiency tricks}
\label{app:complexity_lld}

\paragraph{Module overview.}
\textsc{LLD} analyzes the impact of negative gradients through a group-weighted hidden embedding score that aggregates
(hidden-state) inner products between positive and negative tokens, weighted by token-level prediction-error similarity.
A direct implementation of pairwise hidden-state inner products across all positive/negative token pairs has cost
\begin{equation}
\mathcal{O}\bigl((\sum_{i\in [N^{+}]} |y^{+}_i|)\,(\sum_{j\in [N^{-}]} |y^{-}_j|)\,d\bigr),
\end{equation}
which is quadratic in the total group token count.

\paragraph{Reformulation as a matrix inner product (summations first).}
\textsc{LLD}/\textsc{NTHR} notes that the score can be rewritten so that summations over tokens are computed before the final
inner product, reducing redundant work. Concretely, each token contributes an outer product between a prediction-error vector
(e.g., $e_{y}-\pi(\cdot|\cdot)\in\mathbb{R}^{|V|}$) and a hidden embedding $h\in\mathbb{R}^{d}$, which naively costs
$\mathcal{O}(|V|d)$ per token.

\paragraph{Restricting to the response vocabulary.}
Since the probability mass is concentrated on tokens appearing in the generated responses, \textsc{LLD}/\textsc{NTHR}
restricts computation to a response-specific vocabulary $V_x^{\star}$ for each prompt $x$, with $|V_x^{\star}|\ll |V|$.
This reduces the per-token outer-product accumulation from $\mathcal{O}(|V|d)$ to
\begin{equation}
\mathcal{O}(|V_x^{\star}|d).
\end{equation}

\paragraph{Total per-group overhead (LLD/NTHR).}
Let $T := \sum_{i=1}^{G} |y_i|$ be the total number of tokens in the group. With the above reformulation and vocabulary
restriction, the dominant cost becomes linear in $T$:
\begin{equation}
\boxed{
\mathcal{O}\bigl(T\,|V_x^{\star}|\,d\bigr)
\quad \text{(or } \mathcal{O}(T\,|V|\,d)\text{ without restriction).}
}
\end{equation}
The final matrix inner product adds at most $\mathcal{O}(|V_x^{\star}|d)$, which is lower order compared to the token
accumulation term. The additional memory is $\mathcal{O}(|V_x^{\star}|d)$ for storing the accumulated statistics.

\paragraph{Takeaway.}
Both methods exploit structure implied by gradient-inner-product decompositions:
\textsc{ResRL} reduces the problem to low-rank projection in $\mathbb{R}^d$ (hence $\mathcal{O}((M+T_{-})dk)$),
whereas \textsc{LLD}/\textsc{NTHR} reduces quadratic token-pair interactions to a linear-time accumulation over tokens
(hence $\mathcal{O}(T\,|V_x^{\star}|d)$ after restricting the vocabulary).

\paragraph{ResRL vs.\ LLD/NTHR: time-complexity reduction.}
Comparing the dominant per-group overhead terms,
\textsc{ResRL} costs $\mathcal{O}((M+T_{-})dk)$, while \textsc{LLD}/\textsc{NTHR} costs
$\mathcal{O}(T\,|V_x^{\star}|\,d)$ after vocabulary restriction. Therefore, the
asymptotic reduction factor in time (LLD over ResRL) is
\begin{equation}
\frac{\mathcal{O}(T\,|V_x^{\star}|\,d)}{\mathcal{O}((M+T_{-})dk)}
\;=\;
\mathcal{O}\!\left(\frac{T\,|V_x^{\star}|}{(M+T_{-})\,k}\right).
\end{equation}
When $M \ll T_{-}$ and $T \asymp T_{-}$ (typical long-rollout groups), this simplifies to
$\mathcal{O}(|V_x^{\star}|/k)$, i.e., \textsc{ResRL} reduces the overhead by roughly a
factor of $|V_x^{\star}|/k$ relative to \textsc{LLD}/\textsc{NTHR}. In contrast, against a
na\"{i}ve \textsc{LLD} implementation without reformulation (quadratic token-pair cost),
\textsc{ResRL} replaces $\mathcal{O}(T_{+}T_{-}d)$ with $\mathcal{O}((M+T_{-})dk)$, yielding a
much larger reduction of $\mathcal{O}\!\left(\frac{T_{+}T_{-}}{(M+T_{-})k}\right)$.

\clearpage

\section{Additional Implementation Details}

\subsection{Additional Experiments}
\begin{table*}[h]
    \centering
    \caption{Avg@16 performance comparison on mathematical reasoning benchmarks using Qwen3-32B Base model. Models are trained with {8192 max response length}.}
    \label{tab:math32}
    \resizebox{\textwidth}{!}{
    \begin{tabular}{l|cccccc|c}
        \toprule
        \textbf{Method} & \textbf{AIME24} & \textbf{AIME25} & \textbf{AMC23} & \textbf{MATH500} & \textbf{Minerva} & \textbf{Olympiad} & \textbf{Average Acc.} \\
                \midrule
                \multicolumn{8}{c}{\textbf{RLVR from the Qwen3-32B Base Model (Think mode, 8192 max tokens)}} \\
\midrule
        NSR (Weighted-Reinforce) & 54.7 & 45.6 & 85.8 & 88.1 & 47.7 & 64.4 & 64.4 \\
        \textbf{ResRL (ours)} & 60.9 & 44.4 & 89.6 & 94.5 & 49.6 & 70.7 & \textbf{68.3} \\
        \bottomrule
    \end{tabular}}
\end{table*}
\begin{figure*}[h]
  \centering
  \includegraphics[width=\textwidth]{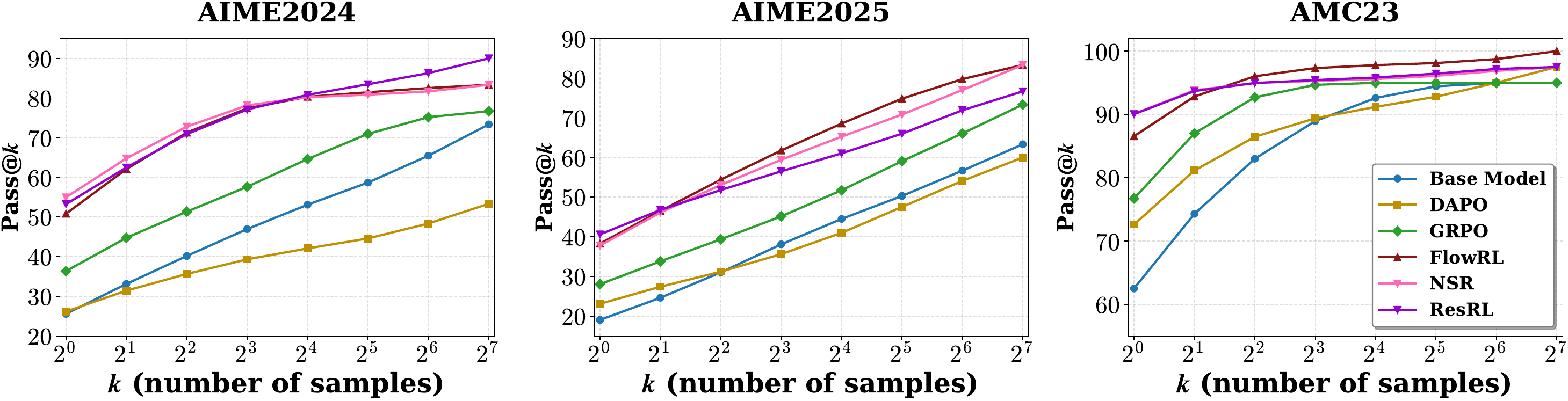} 
  \caption{Pass@$k$ performance on AIME24/25 and AMC23 using Qwen3-8B.
ResRL dominates practical low-to-mid regimes ($k{\le}2^6$) and remains competitive at high compute ($k{=}2^7$). This confirms ResRL optimizes the precision--diversity trade-off, securing reliable reasoning without relying on the high-variance ``brute-force'' exploration of unconstrained methods.}
\label{fig:pass8k}
\end{figure*}
\begin{figure*}[h]
  \centering
  \includegraphics[width=\textwidth]{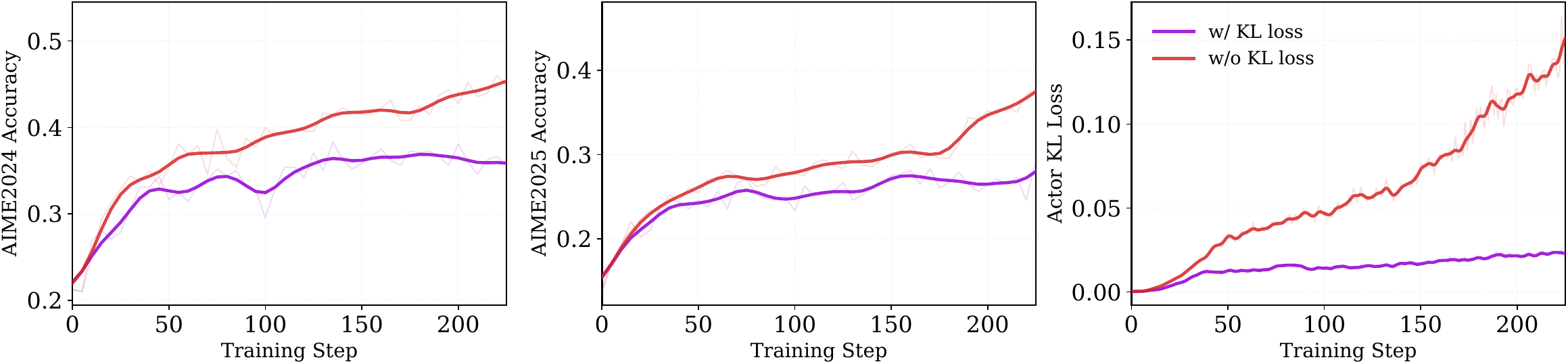} 
\caption{Ablation of the KL penalty using Qwen3-8B.
    Removing the explicit KL term (red) significantly boosts accuracy on \textbf{(a)} AIME2024 and \textbf{(b)} AIME2025 (Avg@16) compared to the standard configuration (purple). 
    \textbf{(c)} The rising KL divergence reflects an expanded exploration horizon enabled by ResRL. 
    Crucially, training remains stable despite this drift, confirming that ResRL's projection-based weighting acts as a sufficient intrinsic regularizer, rendering the strict SFT constraint redundant.}
  \label{fig:kl_ablation}
\end{figure*}
\clearpage
\begin{figure*}[h]
    \centering
    \includegraphics[width=\textwidth]{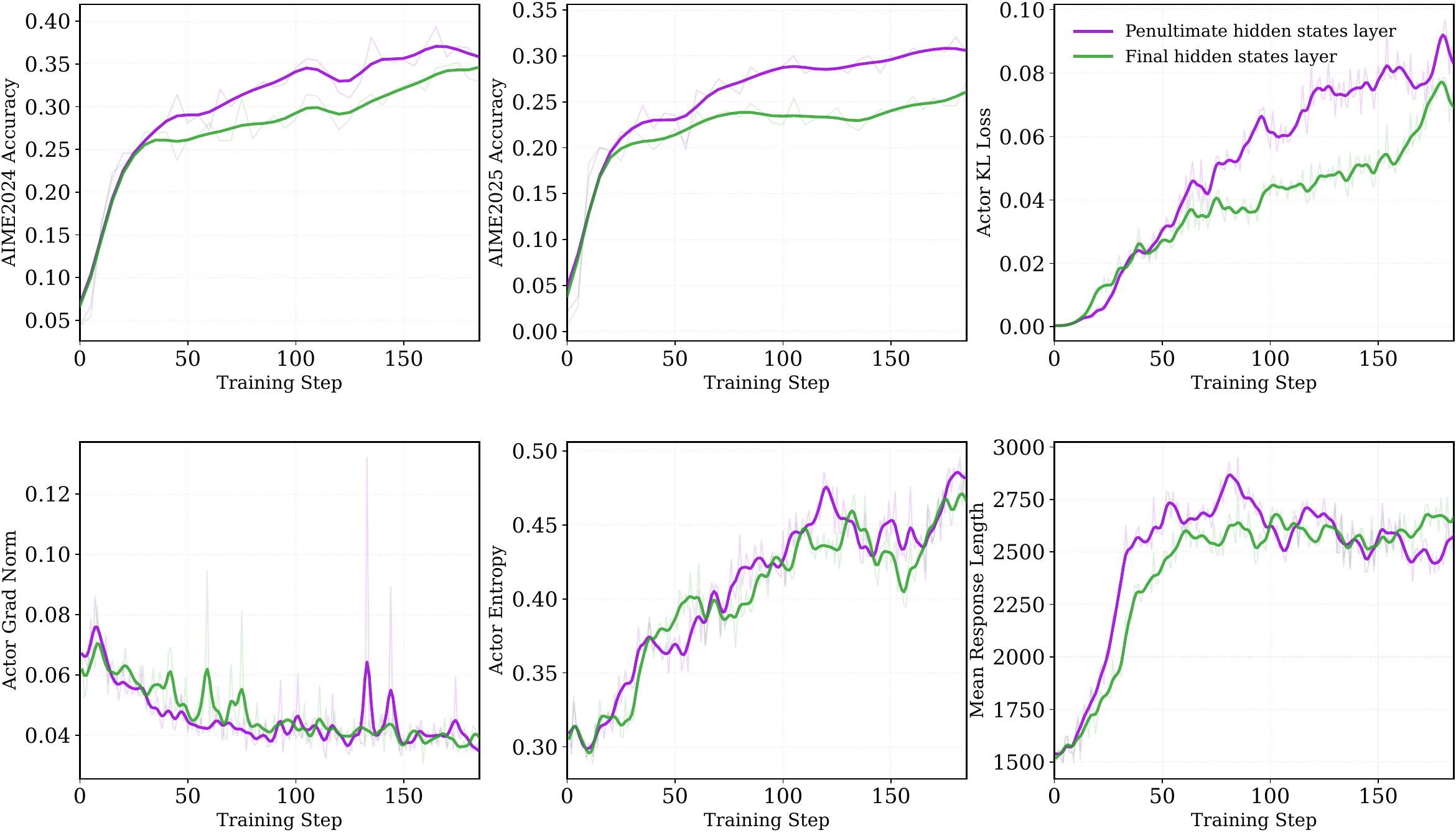}
\caption{Impact of hidden layer selection on reasoning performance. 
Utilizing the penultimate hidden layer (purple) yields significantly superior accuracy on \textbf{(a)} AIME2024 and \textbf{(b)} AIME2025 compared to the final hidden layer (green). 
\textbf{(c)} The optimization dynamics, characterized by elevated KL divergence and actor entropy, indicate that the penultimate layer facilitates more sufficient exploration. 
Crucially, this confirms that the penultimate layer captures high-level semantic abstractions while mitigating the immediate token-prediction bias inherent to the final layer, thereby preventing premature convergence to suboptimal reasoning paths.}
\label{fig:ablation_layer_selection}
    \label{fig:fp}
\end{figure*}
\clearpage
\begin{figure*}[h]
    \centering
    \includegraphics[width=\textwidth]{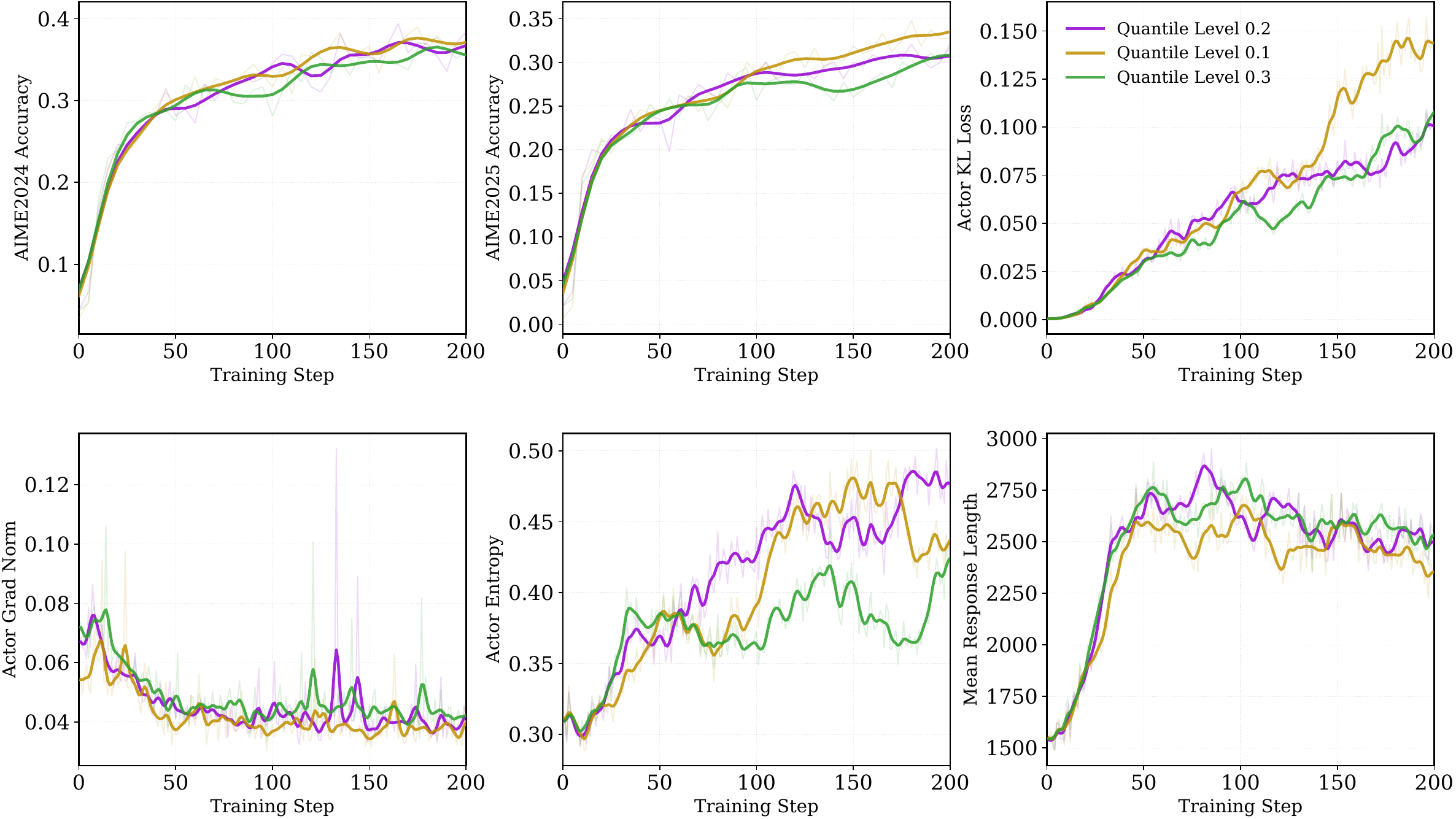}
    \caption{Sensitivity analysis of the quantile hyperparameter. 
Lower quantile thresholds (0.1 and 0.2) accelerate convergence and yield superior accuracy on \textbf{(a)} AIME2024 and \textbf{(b)} AIME2025 compared to the more permissive 0.3 level (green). 
\textbf{(c)} The elevated KL divergence at lower quantiles indicates that stricter residual penalization drives more aggressive exploration. 
Crucially, the 0.1 configuration maintains optimization stability (low gradient variance) despite this exploration, validating that the projection residual is a high-fidelity signal for error suppression, thus justifying a stricter gating mechanism.}
\label{fig:ablation_quantile}
    \label{fig:Q}
\end{figure*}

\clearpage

\begin{figure*}[h]
    \centering
    \includegraphics[width=\textwidth]{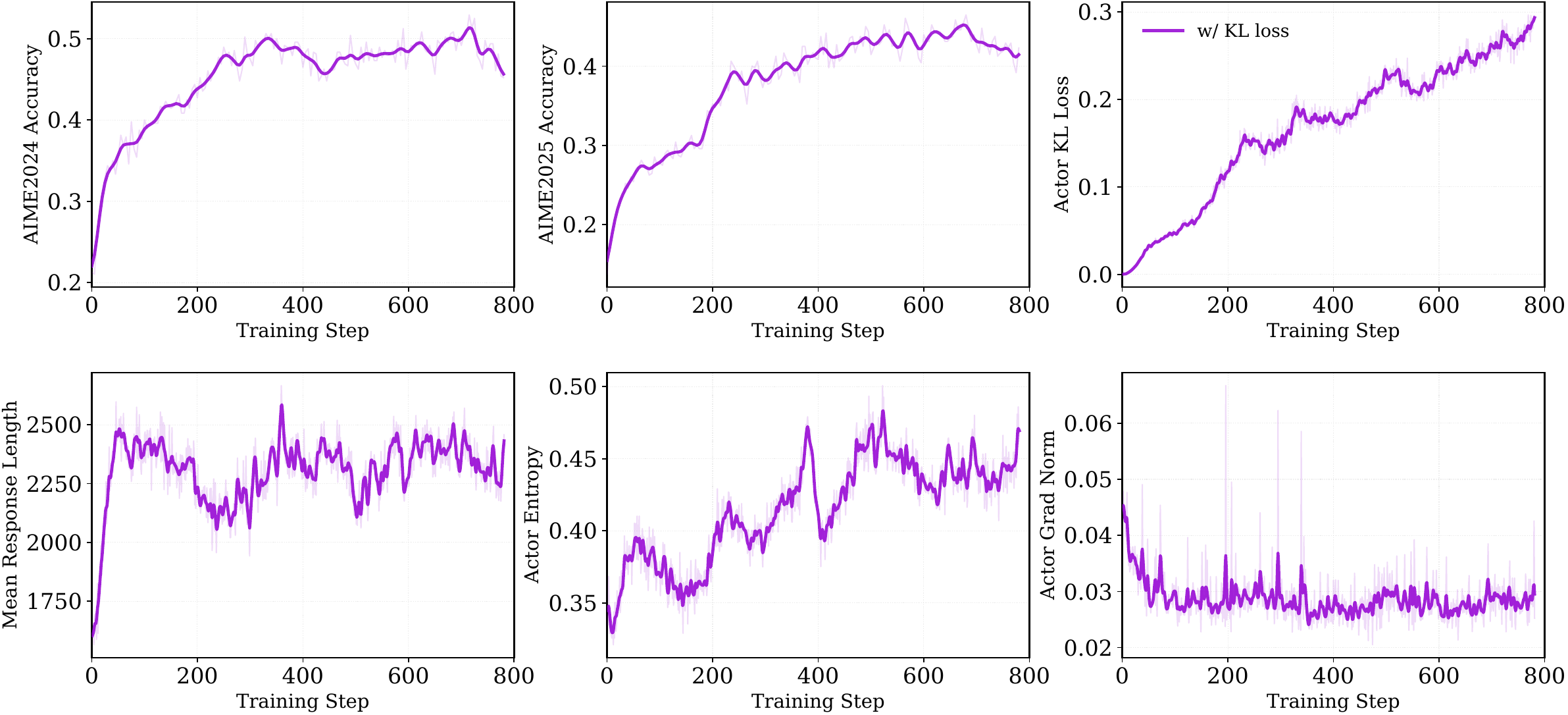}
    \caption{\textbf{Long-horizon training stability of ResRL (Qwen3-8B) without explicit KL regularization.} 
    We extend the training to 800 steps to verify asymptotic stability. 
    Despite the removal of the KL penalty, the model exhibits \textit{(Top)} continuous performance gains on AIME 2024/2025 and a natural, bounded rise in KL divergence indicative of effective exploration; and \textit{(Bottom)} stable dynamics in response length, actor entropy, and gradient norms. 
    This confirms that ResRL's subspace-based semantic constraints successfully prevent mode collapse and reward hacking without requiring rigid SFT anchoring.}
    \label{fig:long_run_stability}
\end{figure*}
\clearpage

\begin{figure*}[h]
  \centering
  \includegraphics[width=\textwidth]{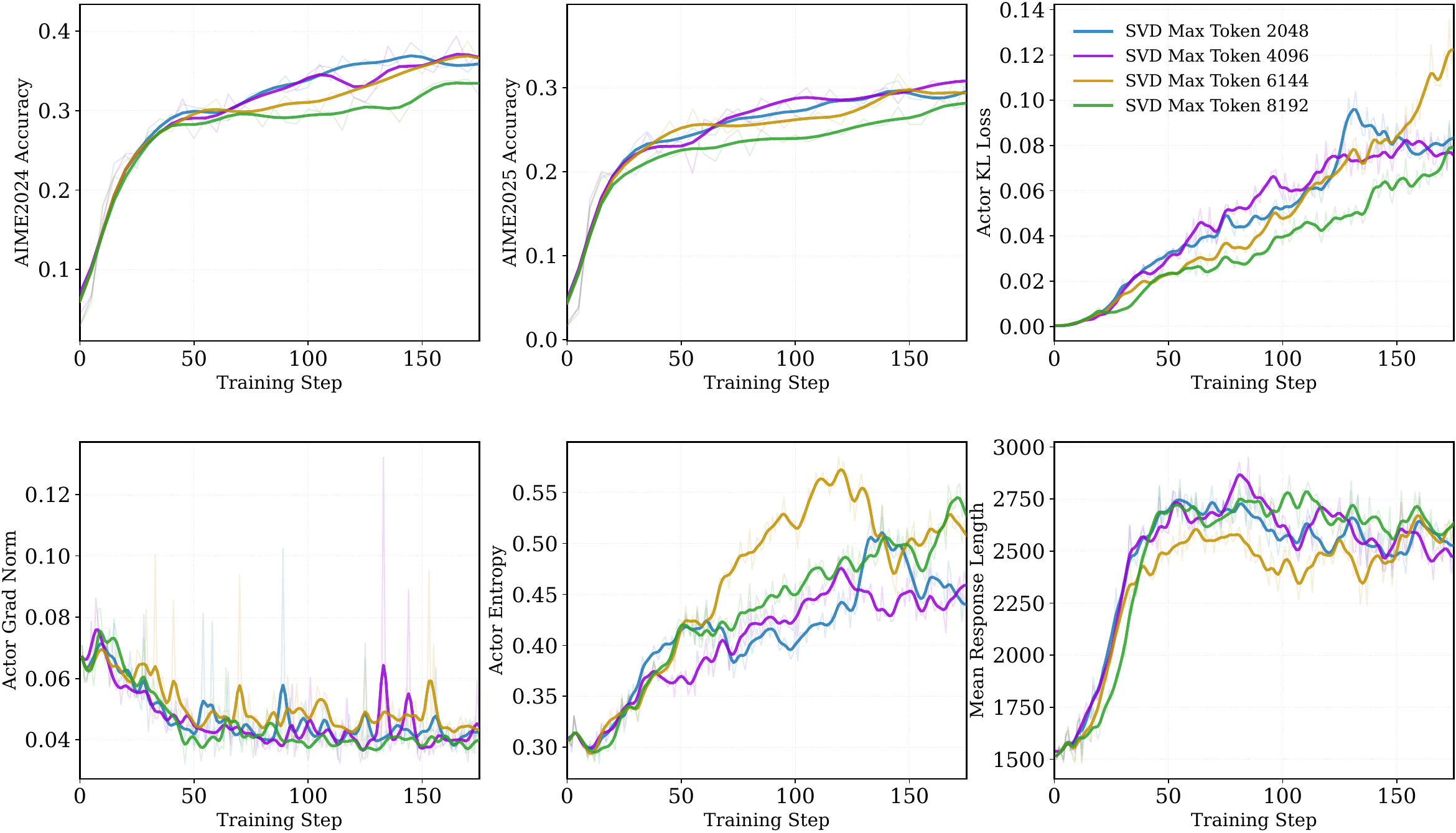} 
\caption{\textbf{Effect of the SVD subspace budget $M_{\max}$ on learning dynamics.}
Ablations over $M_{\max}\!\in\!\{2048,4096,6144,8192\}$ (rank $k$ fixed) under group rollouts ($G{=}4$) and max response length 4096.
We report AIME2024/2025 accuracy (top row, left/middle), policy drift measured by actor KL loss (top row, right),
optimization stability via actor gradient norm (bottom row, left), exploration via actor entropy (bottom row, middle),
and mean response length (bottom row, right). Moderate budgets ($M_{\max}{=}2048$--$4096$) yield similar accuracy and stable
optimization, consistent with the redundancy and low effective dimensionality of Transformer representations, which makes
the dominant rank-$k$ positive subspace recoverable from a subsample. Increasing $M_{\max}$ beyond this range exhibits
diminishing returns and can alter the gating signal: very large budgets may compress residual contrast and push
token-wise weights toward their floor, weakening error-specific shaping (e.g., $M_{\max}{=}8192$ shows lower KL yet slower
accuracy gains), while intermediate-large budgets can increase drift and exploration (higher KL/entropy) without a
proportional accuracy improvement (e.g., $M_{\max}{=}6144$). Overall, $M_{\max}{=}4096$ provides a favorable tradeoff between
subspace completeness, residual discriminability, and SVD compute.}
\label{sub_sweep}
\end{figure*}
\clearpage

\begin{figure*}[h]
    \centering
    \includegraphics[width=\textwidth]{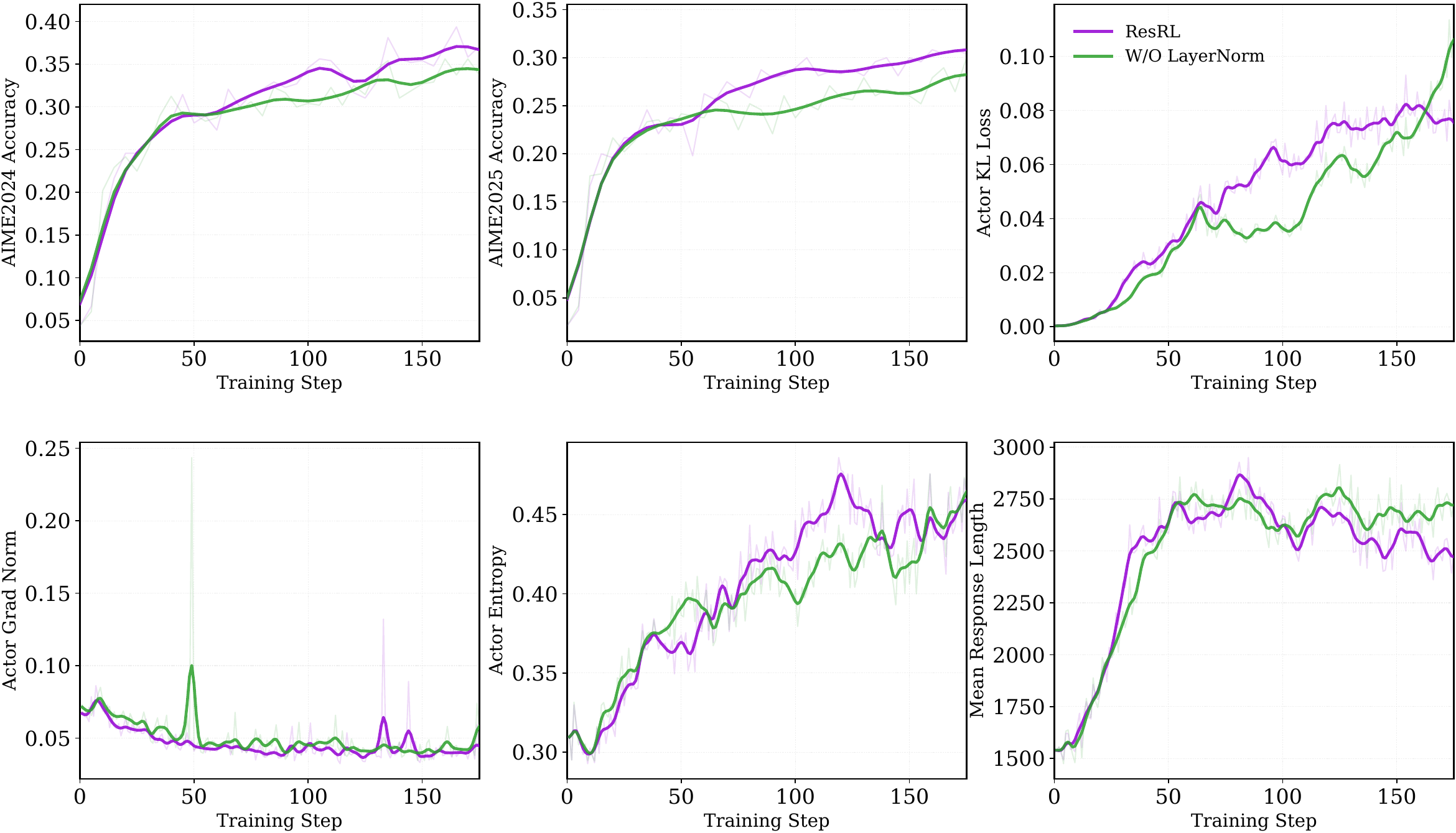}
    \caption{Impact of representation normalization on performance and stability. 
Removing the LayerNorm and centering mechanism (green) results in degraded performance on \textbf{(a)} AIME2024 and \textbf{(b)} AIME2025 compared to the standard ResRL configuration (purple). 
\textbf{(c)} The optimization dynamics reveal that the unnormalized configuration suffers from severe instability, evidenced by high-variance spikes in actor gradient norms. 
Crucially, this confirms that normalizing the representation geometry is essential for deriving reliable residual signals, thereby preventing erratic policy updates and ensuring robust optimization.}
\label{fig:ablation_layernorm}
    \label{fig:noLN}
\end{figure*}

\clearpage

\subsection{Additional Analysis}
\label{app:additional_analysis}

\paragraph{Pass@$k$ dynamics and compute regimes.}
Figures~\ref{fig:passk_4}, \ref{fig:pass1k}, and \ref{fig:pass8k} visualize Pass@$k$ on AIME24/25 and AMC23 as the sampling budget increases (from low-$k$ to high-$k$).
Across model scales, ResRL exhibits its most consistent advantage in the low-to-mid compute regime ($k\!\le\!64$), i.e., it improves Pass@1-style reliability and the early portion of the Pass@$k$ curve where practical decoding budgets typically operate.
This pattern is aligned with the intended role of residual-based negative reweighting: by attenuating updates on negatives that are geometrically aligned with the positive subspace, ResRL avoids suppressing ``innocent'' intermediate steps that are shared across successful and failed rollouts, thereby improving sample efficiency under constrained sampling.
Importantly, ResRL remains competitive at high $k$ (and can dominate in certain backbones), indicating that concentrating suppression in $S^\perp$ does not collapse exploration; rather, it reallocates negative pressure toward error-specific components while preserving diversity through the protected subspace.

\paragraph{Mathematical reasoning (Avg@16).}
Table~\ref{tab:math1} reports Avg@16 across six benchmarks on Qwen3-1.7B/4B/8B.
ResRL improves the aggregate Avg@16 at all scales, with the largest and most diagnostic gains on AIME24/25.
On Qwen3-1.7B, ResRL reaches an average of 48.6, exceeding NSR (47.5) and substantially improving over FlowRL (42.0) and GRPO (35.9); the improvement is concentrated on AIME24/25 (34.9/29.6 vs.\ 24.2/14.4 for FlowRL), while near-saturated datasets (e.g., MATH500) change minimally.
On Qwen3-4B, ResRL achieves the strongest overall average (57.0), surpassing FlowRL (53.6), GRPO (53.1), and NSR (52.1); notably, it raises AIME24/25 to 45.2/38.6 (vs.\ 35.4/30.2 for FlowRL), consistent with improved reliability under fixed sampling (Avg@16) where indiscriminate negative upweighting can over-penalize partially-correct shared structure.
On Qwen3-8B, ResRL attains the best average (64.7), improving AIME25 and MATH500 to 41.1 and 92.7 while remaining competitive on AMC23 and Olympiad.
Interestingly, NSR can be higher on AIME24 at this scale, suggesting that diversity-centric penalties may still expand high-variance search on a subset of problems, but ResRL yields the strongest overall profile---consistent with the goal of reducing destructive suppression of shared reasoning directions while still penalizing genuinely erroneous components.

\paragraph{Code reasoning.}
Table~\ref{code_1} evaluates Qwen3-4B on LiveCodeBench, CodeForces, and HumanEval+.
ResRL is best on all reported metrics: it improves LiveCodeBench to 43.2 Avg@16 and 59.9 Pass@16 (vs.\ 42.4/58.7 for FlowRL), and yields a clear margin on CodeForces with the highest rating and percentile (1469.5, 78.9), outperforming NSR (1340.9, 69.3) and FlowRL (1333.7, 68.7).
HumanEval+ is near saturation for all strong methods, where ResRL reaches 97.0 Pass@16, matching or slightly exceeding the best baseline.
Overall, the largest transfer signal is on the open-ended, distribution-shifted setting (CodeForces), consistent with the hypothesis that suppressing overlap-induced interference improves robustness beyond in-distribution Pass@$k$ gains.

\paragraph{Long-horizon agent tasks.}
Table~\ref{tab:alfworld} reports ALFWorld and WebShop results on Qwen2.5-7B-Instruct.
On ALFWorld, ResRL improves overall success to 86.7, surpassing PPO (80.4), EMPG (78.5), and GRPO (74.8), with broad gains across sub-tasks (e.g., \textsc{Look} 85.5 and \textsc{Pick2} 84.2).
This behavior supports the long-horizon intuition: successful and failed trajectories often share substantial prefixes, so naive negative reinforcement can corrupt reusable sub-policies; ResRL mitigates this by protecting shared directions and concentrating suppression on prefix-divergent components.
On WebShop, ResRL improves success to 71.5 (vs.\ 69.3 for EMPG and 65.6 for GRPO) while maintaining a competitive task score (81.2), indicating improved completion reliability without sacrificing reward-bearing behaviors that require exploration.

\paragraph{Tool-use robustness on BFCL.}
Table~\ref{bfcl1} evaluates BFCL tool-use with multi-turn and single-turn accuracies.
ResRL achieves the best Multi-Turn OA (41.25) and the highest overall single-turn OA (68.95).
The gains are particularly pronounced on error-sensitive subsets: Miss Func improves to 47.0 and Miss Param to 34.0, consistent with reduced compounding of localized decision errors in tool selection and argument specification.
At the same time, Long-Context remains substantially harder (e.g., 35.5 for ResRL), suggesting that the observed improvements arise primarily from mitigating localized decision errors and stabilizing multi-step tool planning, rather than extending context capacity per se.

\paragraph{Design takeaway.}
Across math, code, agents, and function calling tasks, the empirical profile is consistent with interference control in representation space:
(i) the largest gains appear in regimes where shared partial structure between positives and negatives is prevalent (AIME and long-horizon trajectories), and
(ii) improvements concentrate in reliability-centric metrics (Avg@16 and low-$k$ Pass@$k$), while remaining competitive at high $k$.
These trends support the view that residual-projection reweighting reduces destructive negative-positive overlap without forcing premature mode collapse, providing a principled precision--diversity trade-off that is favorable under realistic compute budgets.

\clearpage

\section{Training Parameters}
\label{para}
\begin{table}[h]
\centering
\caption{Comprehensive Hyperparameter Configuration for ResRL Math Training. The table details model, optimization, generation, and infrastructure settings derived from the training script.}
\label{tab:full_hyperparams}
\begin{tabular}{lr @{\hspace{1.5cm}} lr} 
\toprule
\textbf{Parameter} & \textbf{Value} & \textbf{Parameter} & \textbf{Value} \\
\midrule
\multicolumn{2}{l}{\textit{\textbf{Model \& Data Configuration}}} & \multicolumn{2}{l}{\textit{\textbf{Generation \& Rollout (Inference)}}} \\
Base Model & Qwen3-1.7B/4B/8B & Rollout Number ($N$) & 4 \\
Algorithm Estimator & GRPO & Temperature & 0.6 \\
Total Epochs & 1 & Top-p & 1.0 \\
Global Train Batch Size & 256 & Top-k & -1 (Disabled) \\
Max Prompt Length & 2048 & Thinking Template & False \\
Max Response Length & 4096 & Truncation Direction & Left \\
Truncation Mode & Left & Devices& Nvidia A100 \\
\midrule
\multicolumn{2}{l}{\textit{\textbf{Optimization Details}}} & \multicolumn{2}{l}{\textit{\textbf{SVD-based Exploration (Critical)}}} \\
Learning Rate & $1 \times 10^{-6}$ & \textbf{SVD Rank} & \textbf{64} \\
LR Warmup Steps & 10 & \textbf{SVD Token Weighting} & \textbf{True} \\
Weight Decay & 0.1 & SVD Max Pos Tokens & 4096 \\
PPO Mini-Batch Size & 64 & & \\
KL Loss Coefficient & 0.0 & \multicolumn{2}{l}{\textit{\textbf{Infrastructure \& Parallelism}}} \\
Entropy Coefficient & 0.0 & Tensor Parallel Size (TP) & 8 \\
Gradient Checkpointing & True & GPUs per Node & 8 \\
Dynamic Batch Size & False & GPU Memory Utilization & 0.65 \\
Remove Padding & True & Save Frequency & 50 Steps \\
\bottomrule
\end{tabular}
\end{table}
\clearpage
\begin{table}[h]
\centering
\caption{Hyperparameter Configuration for ResRL Code Training. The table details model, optimization, generation, and infrastructure settings derived from the training script.}
\label{tab:code_hyperparams_final}
\begin{tabular}{lr @{\hspace{1.5cm}} lr} 
\toprule
\textbf{Parameter} & \textbf{Value} & \textbf{Parameter} & \textbf{Value} \\
\midrule
\multicolumn{2}{l}{\textit{\textbf{Model \& Data Configuration}}} & \multicolumn{2}{l}{\textit{\textbf{Generation \& Rollout (Inference)}}} \\
Base Model & Qwen3-4B & Rollout Number ($N$) & 4 \\
Algorithm Estimator & GRPO & Temperature & 0.6 \\
Total Epochs & 1 & Top-p & 1.0 \\
Global Train Batch Size & 64 & Top-k & -1 (Disabled) \\
Max Prompt Length & 2048 & Thinking Template & False \\
Max Response Length & 4096 & Truncation Direction & Left \\
Truncation Mode & Left & Devices& Nvidia A100 \\
\midrule
\multicolumn{2}{l}{\textit{\textbf{Optimization Details}}} & \multicolumn{2}{l}{\textit{\textbf{SVD-based Exploration (Critical)}}} \\
Learning Rate & $1 \times 10^{-6}$ & \textbf{SVD Rank} & \textbf{64} \\
LR Warmup Steps & 10 & \textbf{SVD Token Weighting} & \textbf{True} \\
Weight Decay & 0.1 & SVD Max Pos Tokens & 4096 \\
PPO Mini-Batch Size & 32 & & \\
KL Loss Coefficient & 0.0 & \multicolumn{2}{l}{\textit{\textbf{Infrastructure \& Parallelism}}} \\
Entropy Coefficient & 0.0 & Tensor Parallel Size (TP) & 8 \\
Gradient Checkpointing & True & GPUs per Node & 8 \\
Dynamic Batch Size & False & GPU Memory Utilization & 0.65 \\
Remove Padding & True & Save Frequency & 50 Steps \\
\bottomrule
\end{tabular}
\end{table}

\clearpage
\section{Output Cases}
\label{cases}
\paragraph{OlympiadBench (Mathematical Olympiad Reasoning).}
On OlympiadBench, we include four independent rollouts from the ResRL-trained Qwen3-8B under a no-think decoding setup to highlight \emph{solution-path diversity} in high-difficulty mathematical reasoning. Across rollouts, the model frequently adopts different decomposition schemes—e.g., selecting alternative intermediate claims, changing the order in which subgoals are proved, or varying the level of algebraic detail—while preserving global logical consistency. This illustrates that ResRL does not merely sharpen a single dominant trajectory; instead, it supports multiple coherent reasoning routes that reach the same target, which is particularly important for olympiad-style problems where there are often several valid proof strategies. The cases therefore serve as qualitative evidence that ResRL sustains diversity at the reasoning-structure level (not just surface phrasing), complementing the improved performance reported under high-$k$ sampling.

\begin{figure}[h]
 \centering
    \includegraphics[width=0.95\textwidth]{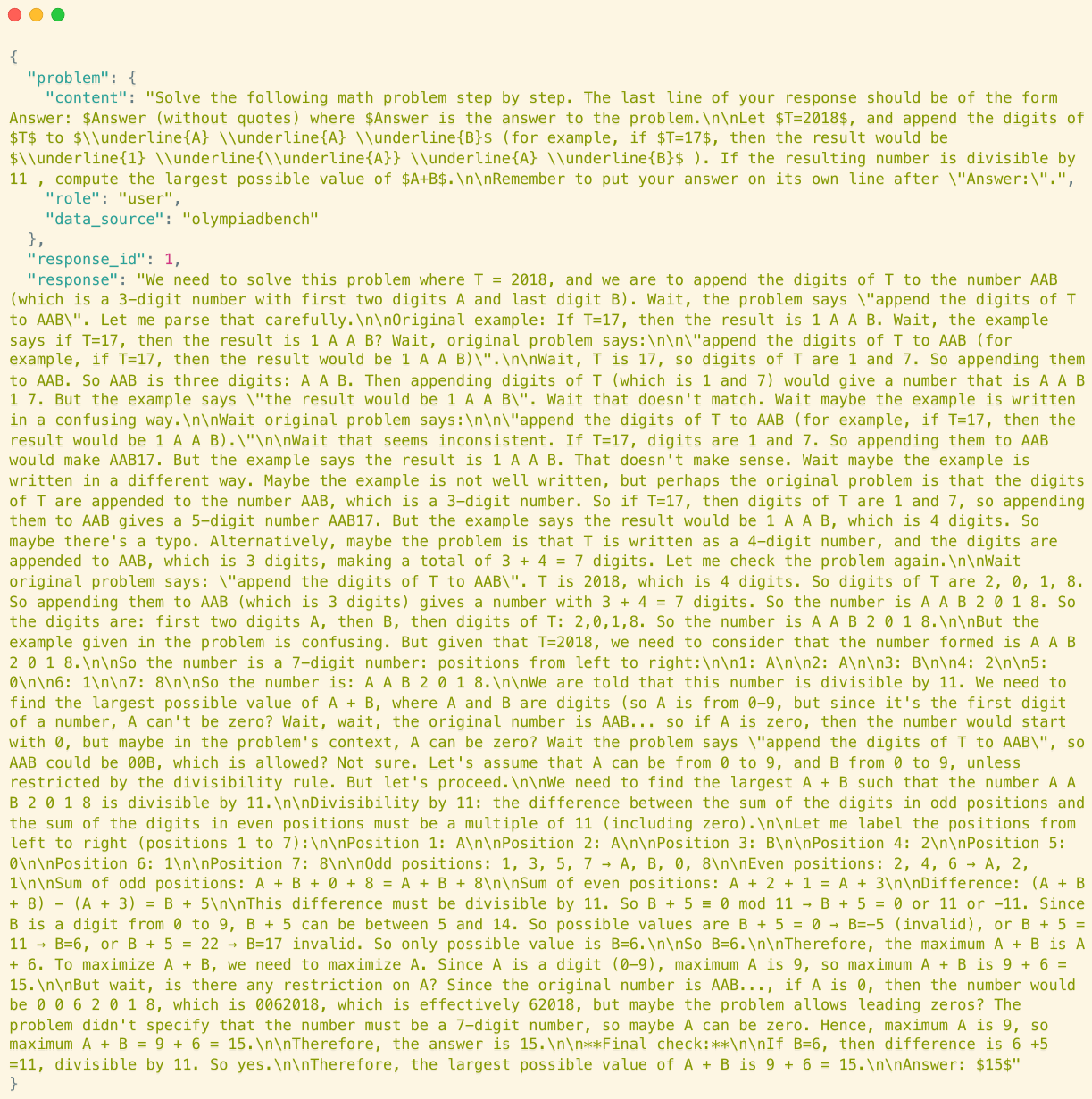}
    \caption{Responses generated by the ResRL-trained Qwen3-8B model (Rollout 1, no think mode) on the OlympiadBench test set.}
    \label{fig:case}
\end{figure}
\clearpage
\begin{figure}[H]
  \centering
  \captionsetup{skip=2pt} 
  \includegraphics[width=\textwidth]{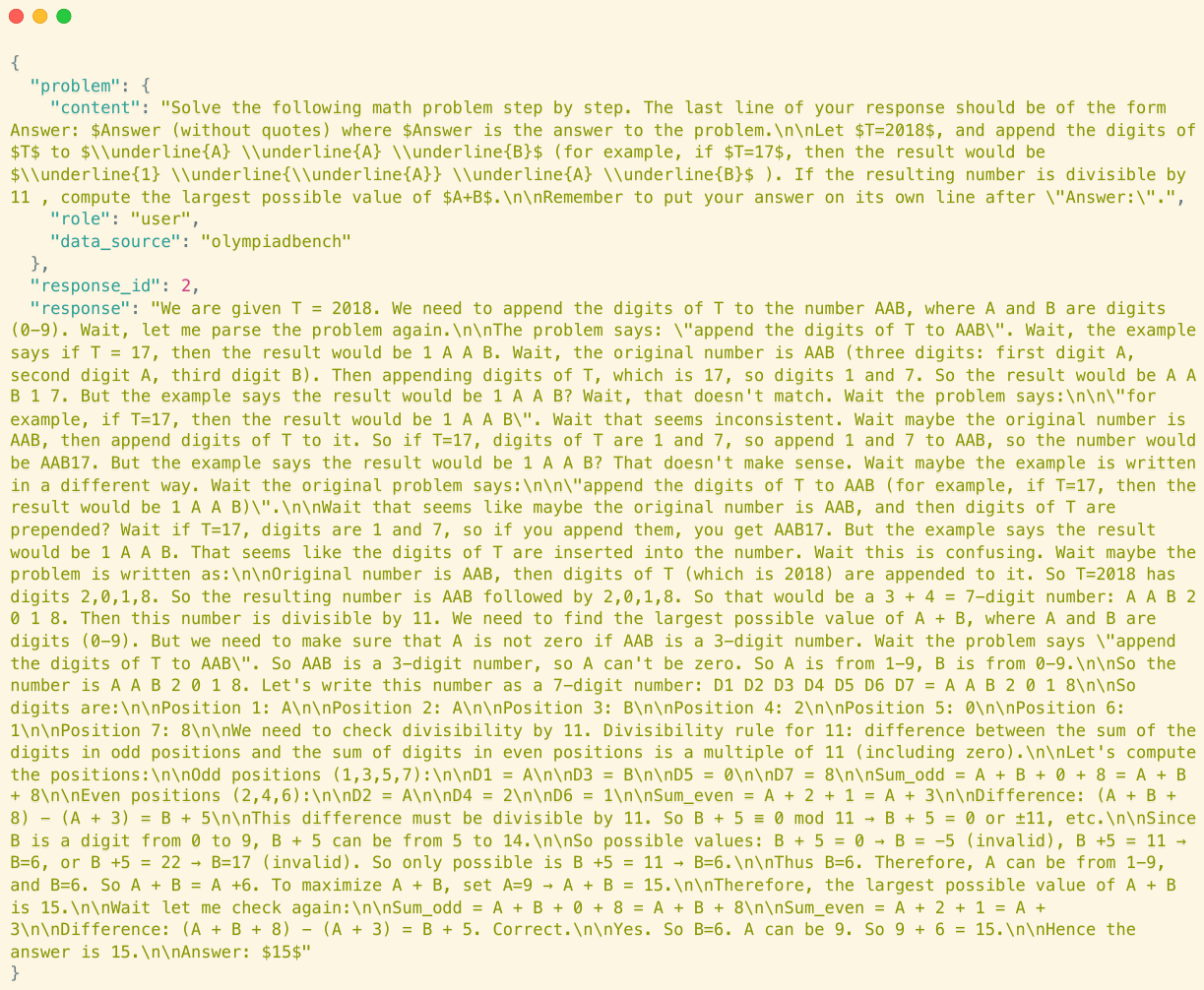}
  \caption{Response generated by the ResRL-trained Qwen3-8B model (Rollout 2, no think mode) on the OlympiadBench test set.}
  \label{fig:case2}
\end{figure}
\clearpage
\begin{figure}[H]
  \centering
  \captionsetup{skip=2pt} 
  \includegraphics[width=\textwidth]{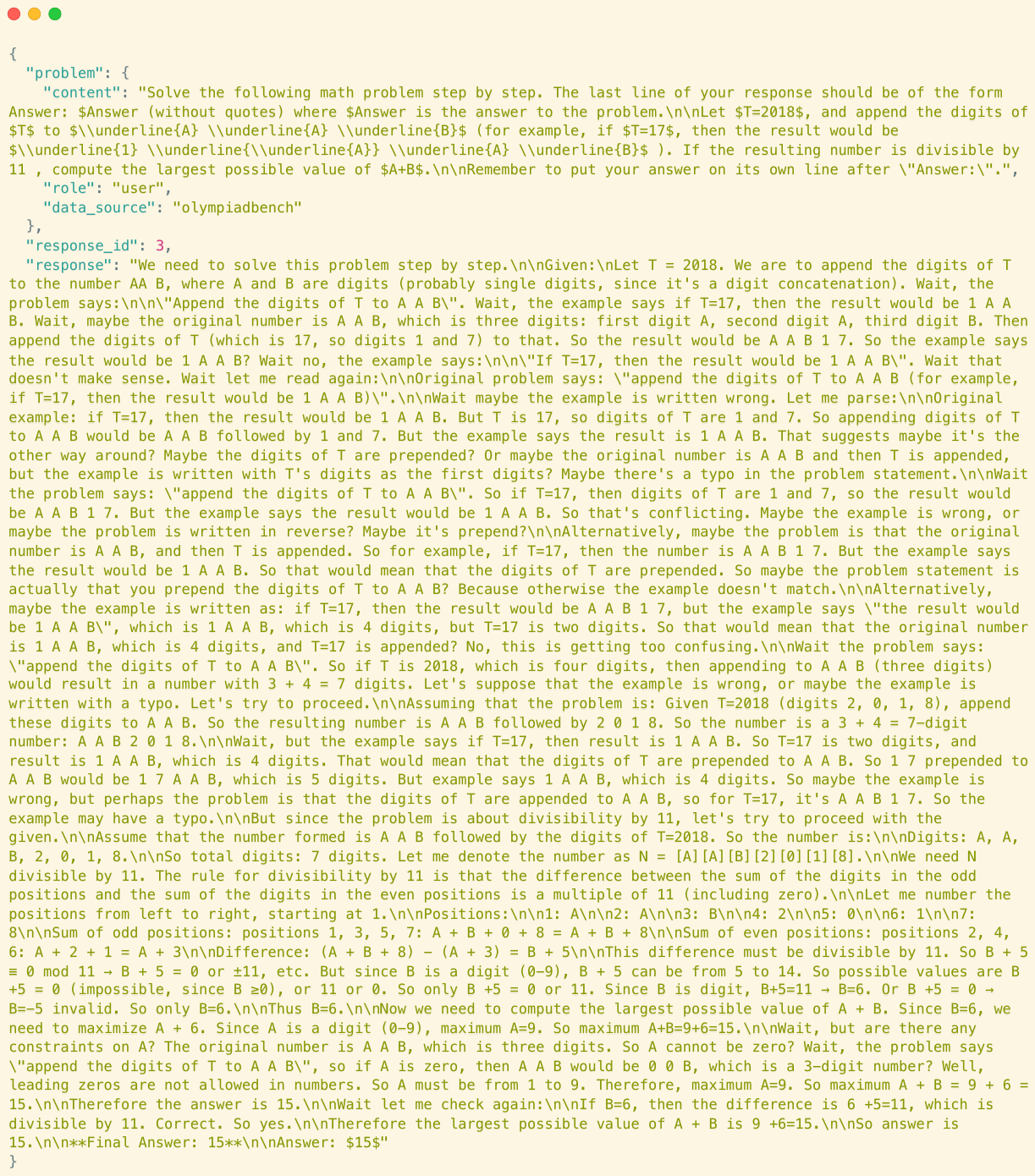}
  \caption{Response generated by the ResRL-trained Qwen3-8B model (Rollout 3, no think mode) on the OlympiadBench test set.}
  \label{fig:case3}
\end{figure}
\clearpage
\begin{figure}[H]
  \centering
  \captionsetup{skip=2pt} 
  \includegraphics[width=\textwidth]{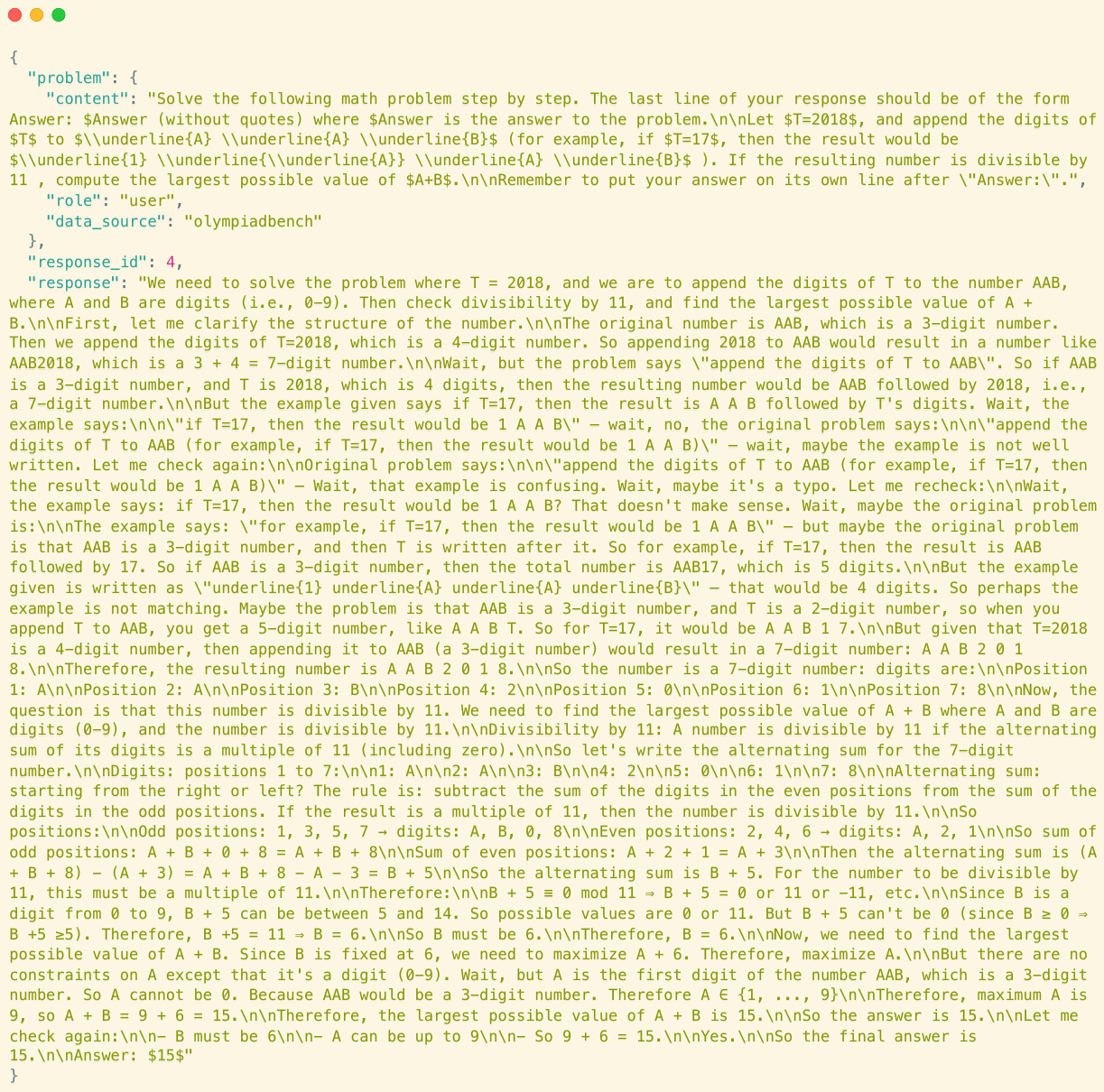}
  \caption{Response generated by the ResRL-trained Qwen3-8B model (Rollout 4, no think mode) on the OlympiadBench test set.}
  \label{fig:case4}
\end{figure}
\clearpage
\paragraph{Math500 (Competitive Math Problem Solving).}
For Math500, we again present four rollouts from the ResRL-trained Qwen3-8B in a no-think setting, focusing on diversity in both \emph{derivation style} and \emph{exposition}. Even when the final answer is constrained by the problem, the generations differ in how they operationalize the solution—e.g., preferring distinct algebraic manipulations, choosing alternative simplifications, or emphasizing different invariants/identities as the central pivot of the argument. Such variation is non-trivial: it indicates that the learned policy distributes probability mass over multiple correct derivations instead of collapsing to a narrow template. In aggregate, these examples support the claim that ResRL promotes robust problem solving that generalizes across instances by enabling multiple valid computational paths, rather than relying on brittle, dataset-specific patterns.

\begin{figure}[H]
  \centering
  \captionsetup{skip=2pt} 
  \includegraphics[width=\textwidth]{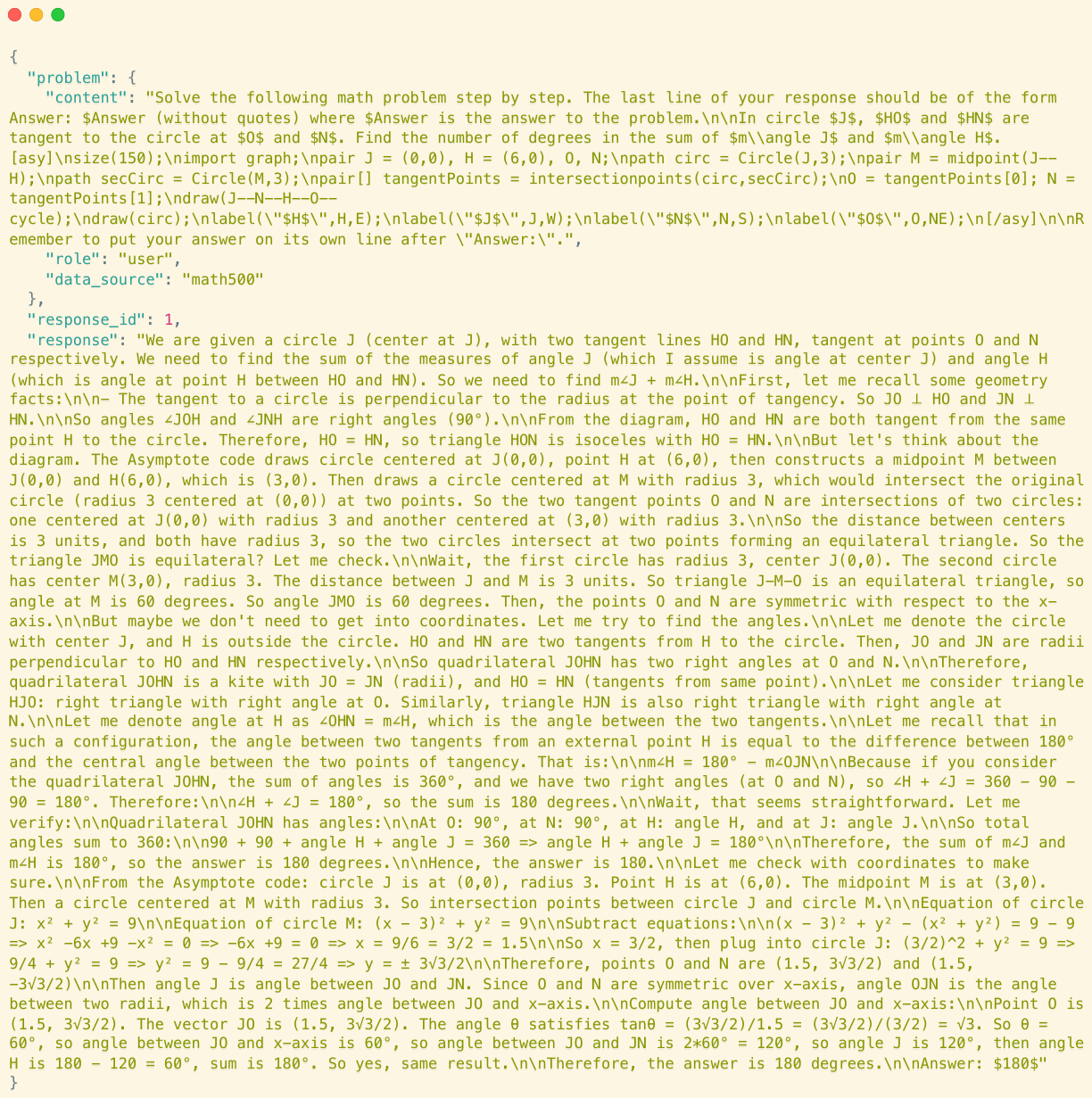}
  \caption{Response generated by the ResRL-trained Qwen3-8B model (Rollout 1, no think mode) on the Math500 test set.}
  \label{fig:case5}
\end{figure}
\clearpage
\begin{figure}[H]
  \centering
  \captionsetup{skip=2pt} 
  \includegraphics[width=\textwidth]{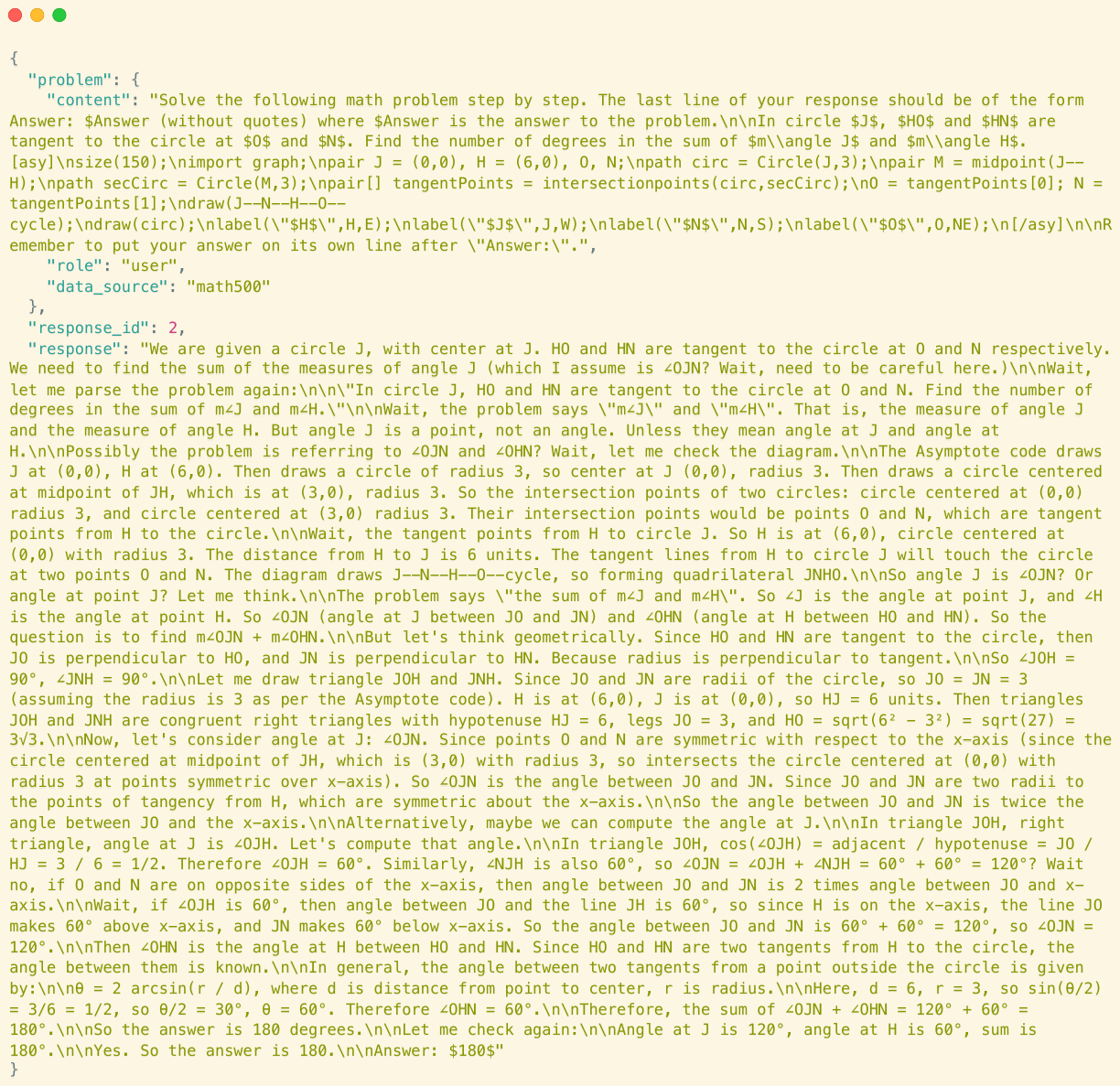}
  \caption{Response generated by the ResRL-trained Qwen3-8B model (Rollout 2, no think mode) on the Math500 test set.}
  \label{fig:case6}
\end{figure}
\clearpage
\begin{figure}[H]
  \centering
  \captionsetup{skip=2pt} 
  \includegraphics[width=\textwidth]{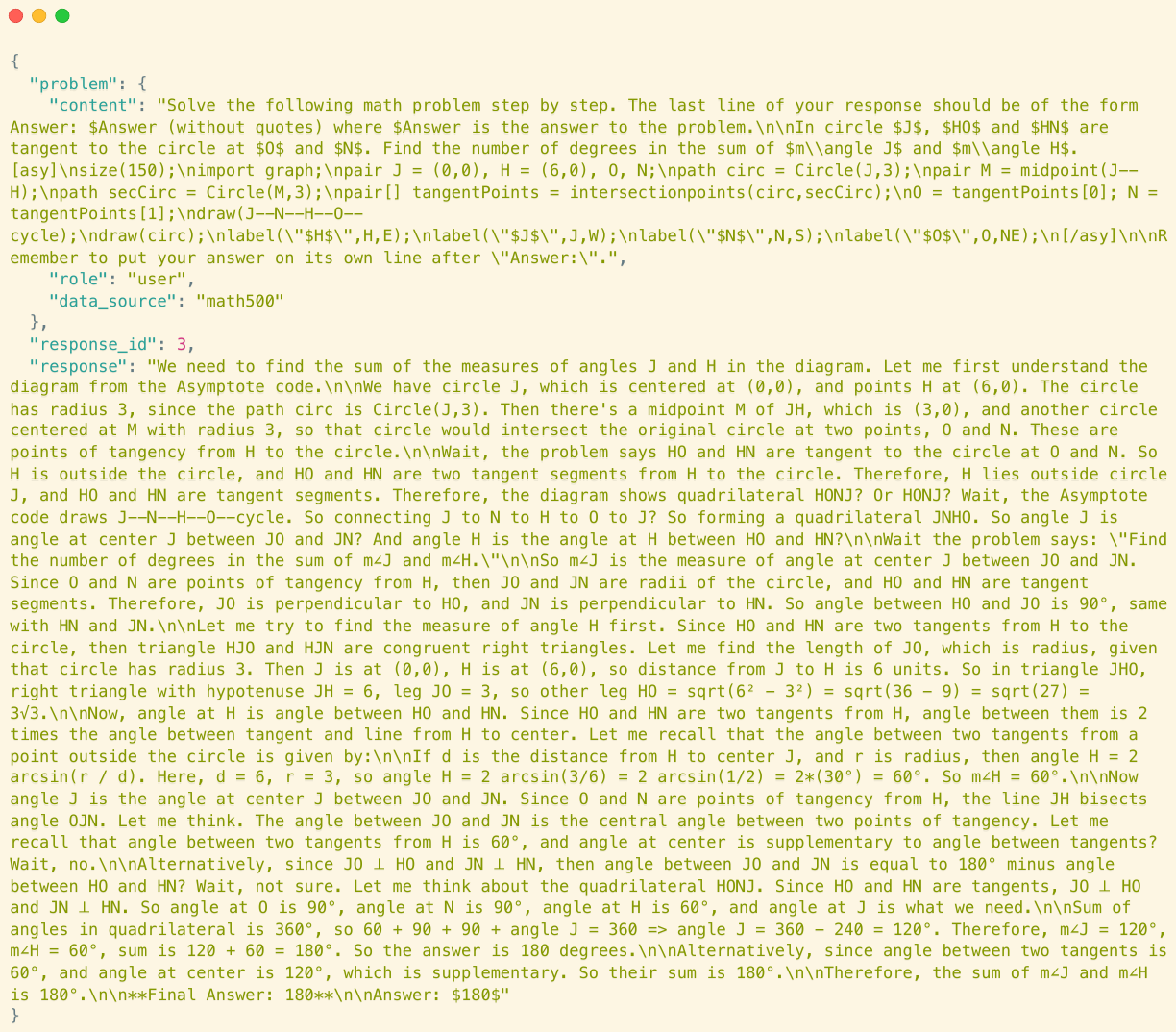}
  \caption{Response generated by the ResRL-trained Qwen3-8B model (Rollout 3, no think mode) on the Math500 test set.}
  \label{fig:case7}
\end{figure}
\clearpage
\begin{figure}[H]
  \centering
  \captionsetup{skip=2pt} 
  \includegraphics[width=\textwidth]{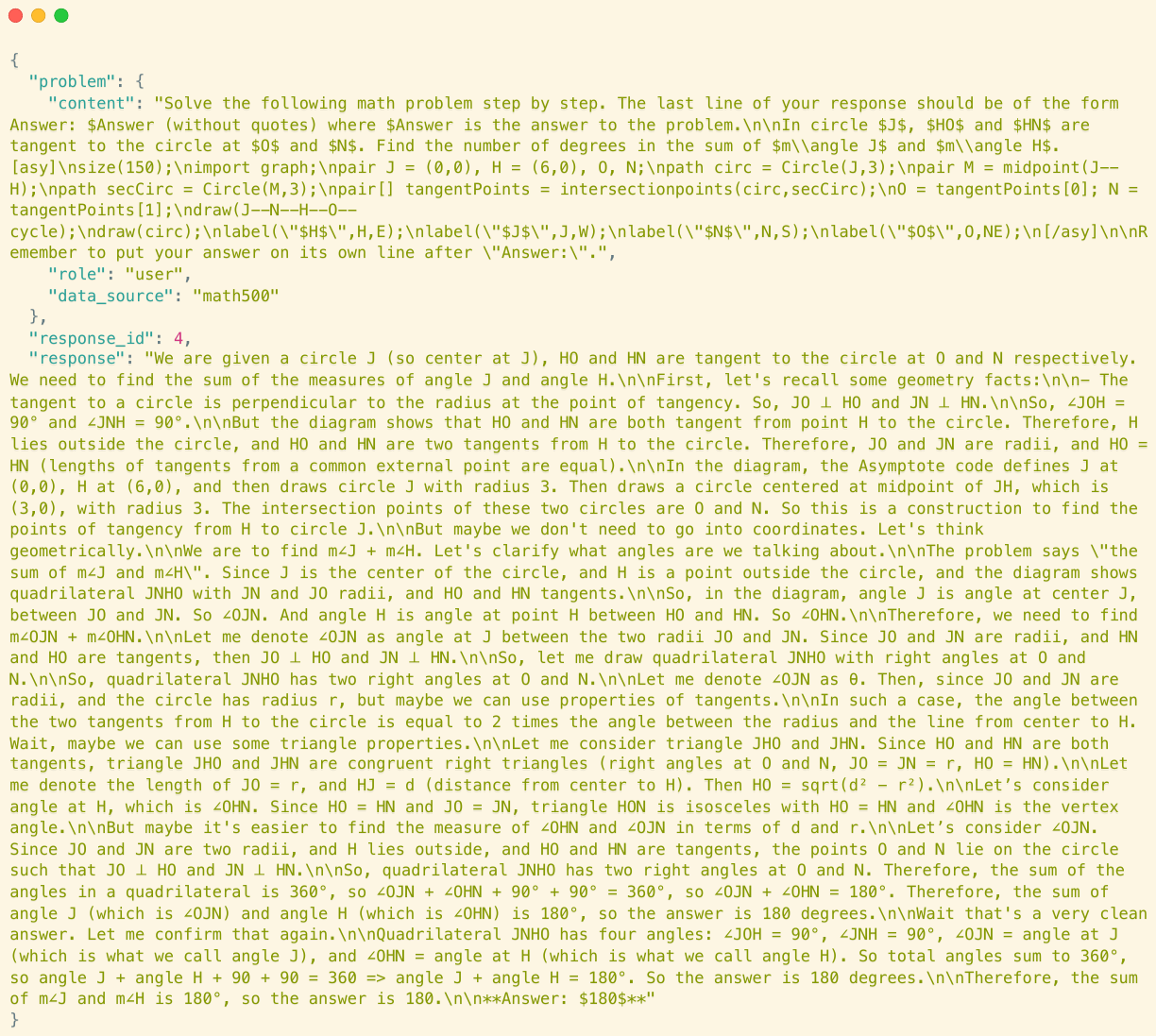}
  \caption{Response generated by the ResRL-trained Qwen3-8B model (Rollout 4, no think mode) on the Math500 test set.}
  \label{fig:case8}
\end{figure}
\clearpage
\paragraph{Humaneval+ (Code Generation and Program Synthesis).}
For Humaneval+, we provide rollouts from the ResRL-trained Qwen3-4B in a think decoding setup to demonstrate \emph{algorithmic} and \emph{implementation-level} diversity. The exhibited outputs may implement different solution paradigms for the same prompt (e.g., a direct brute-force routine versus a more structured approach such as sorting-and-scanning), and they can also vary meaningfully in program organization—function decomposition, variable naming, guard conditions, and edge-case handling—while remaining faithful to the specification. Importantly, this diversity is not cosmetic: it reflects distinct computational strategies and design choices that can affect readability, robustness, and runtime behavior. These cases therefore substantiate that ResRL improves code-generation reliability without sacrificing the breadth of plausible implementations, aligning with the broader objective of maintaining diversity under stronger correctness-oriented training.
\begin{figure}[H]
  \centering
  \captionsetup{skip=2pt} 
  \includegraphics[width=0.95\textwidth]{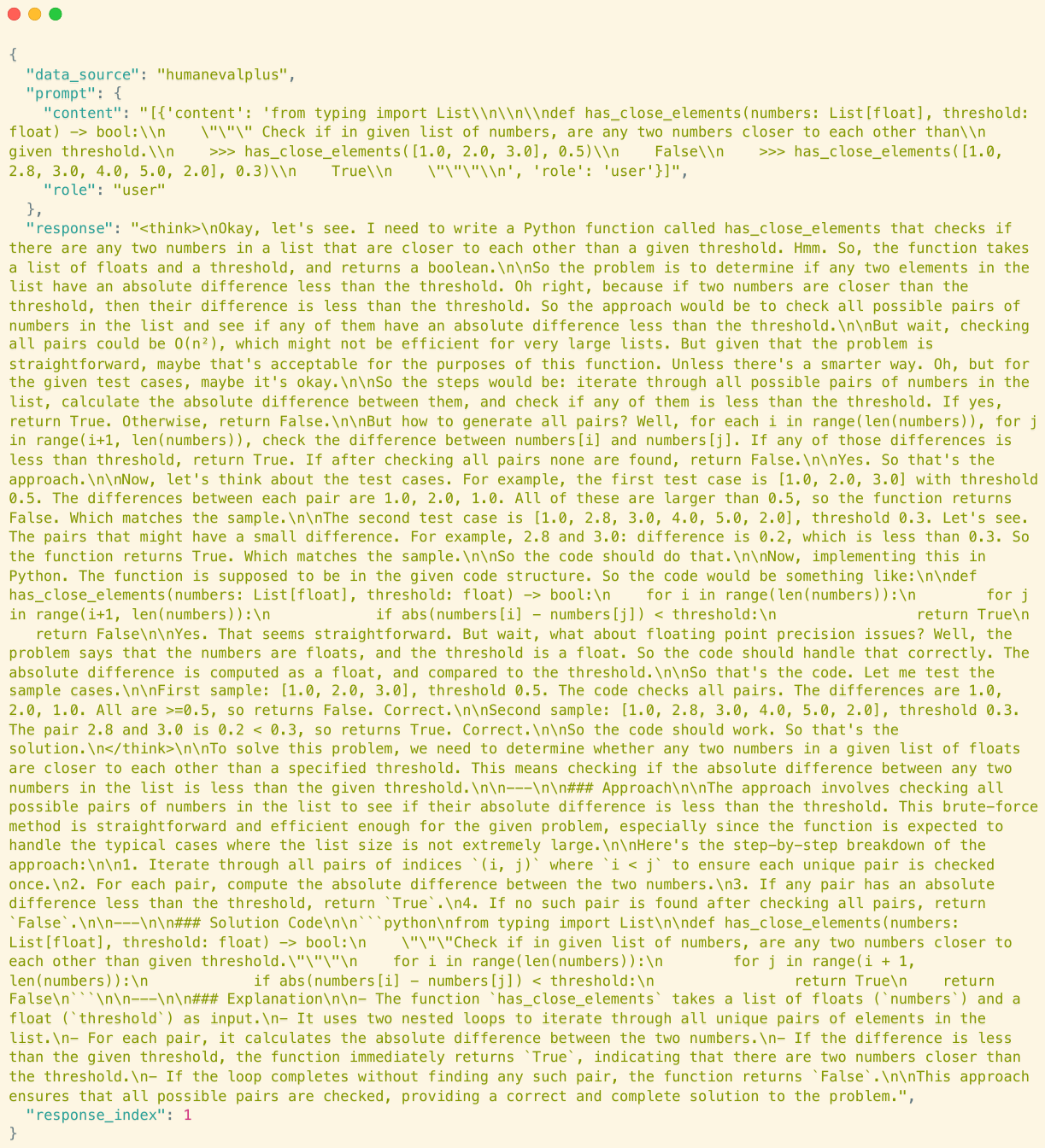}
  \caption{Response generated by the ResRL-trained Qwen3-4B model (Rollout 1, think mode) on the Humaneval+ code test set  (using Brute Force method).}
  \label{fig:case9}
\end{figure}
\clearpage
\begin{figure}[H]
  \centering
  \captionsetup{skip=2pt} 
  \includegraphics[width=\textwidth]{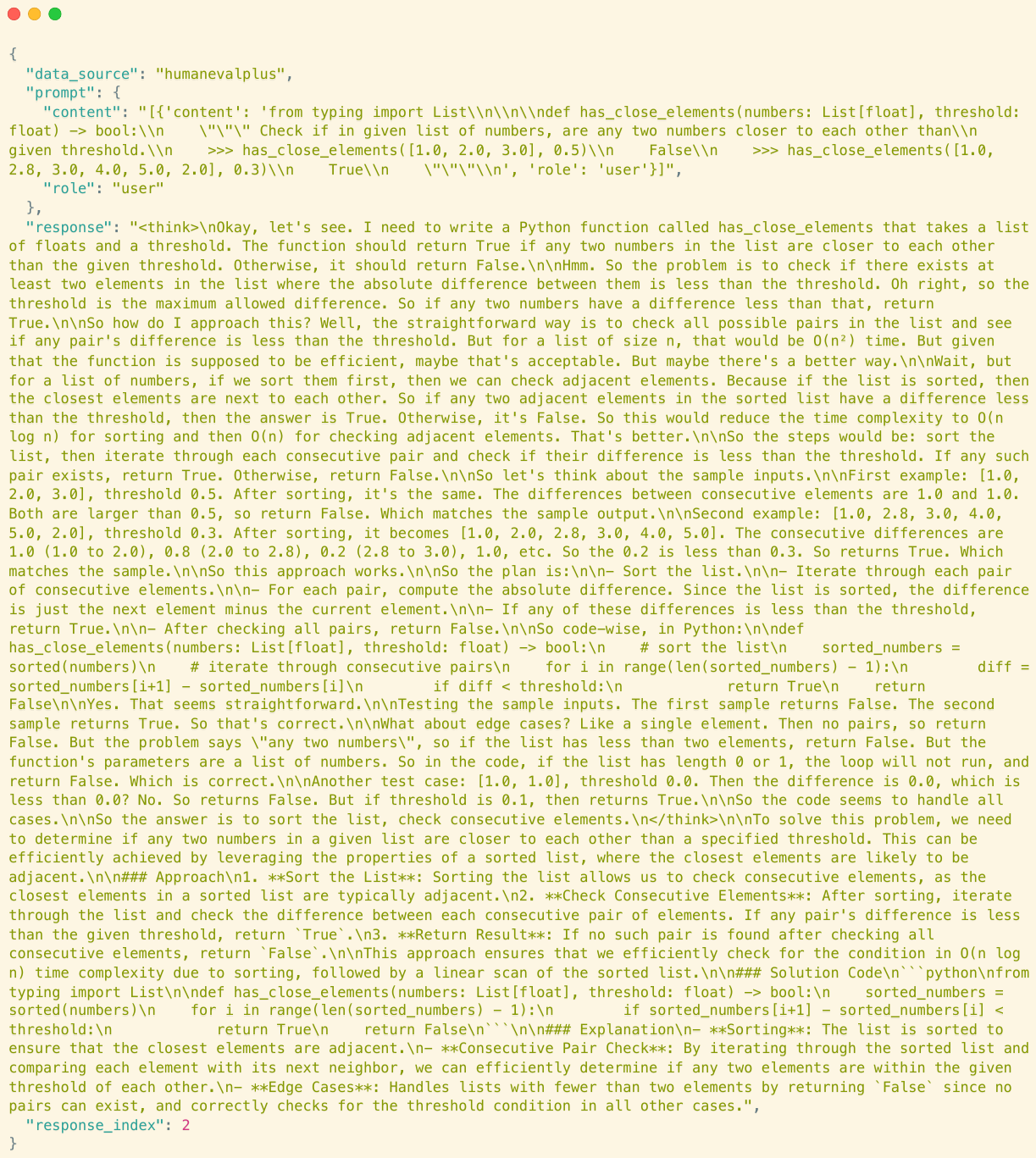}
  \caption{Response generated by the ResRL-trained Qwen3-4B model (Rollout 2, think mode) on the Humaneval+ code test set (using Sorting \& Scanning method).}
  \label{fig:case10}
\end{figure}
\clearpage
\begin{figure}[H]
  \centering
  \captionsetup{skip=2pt} 
  \includegraphics[width=\textwidth]{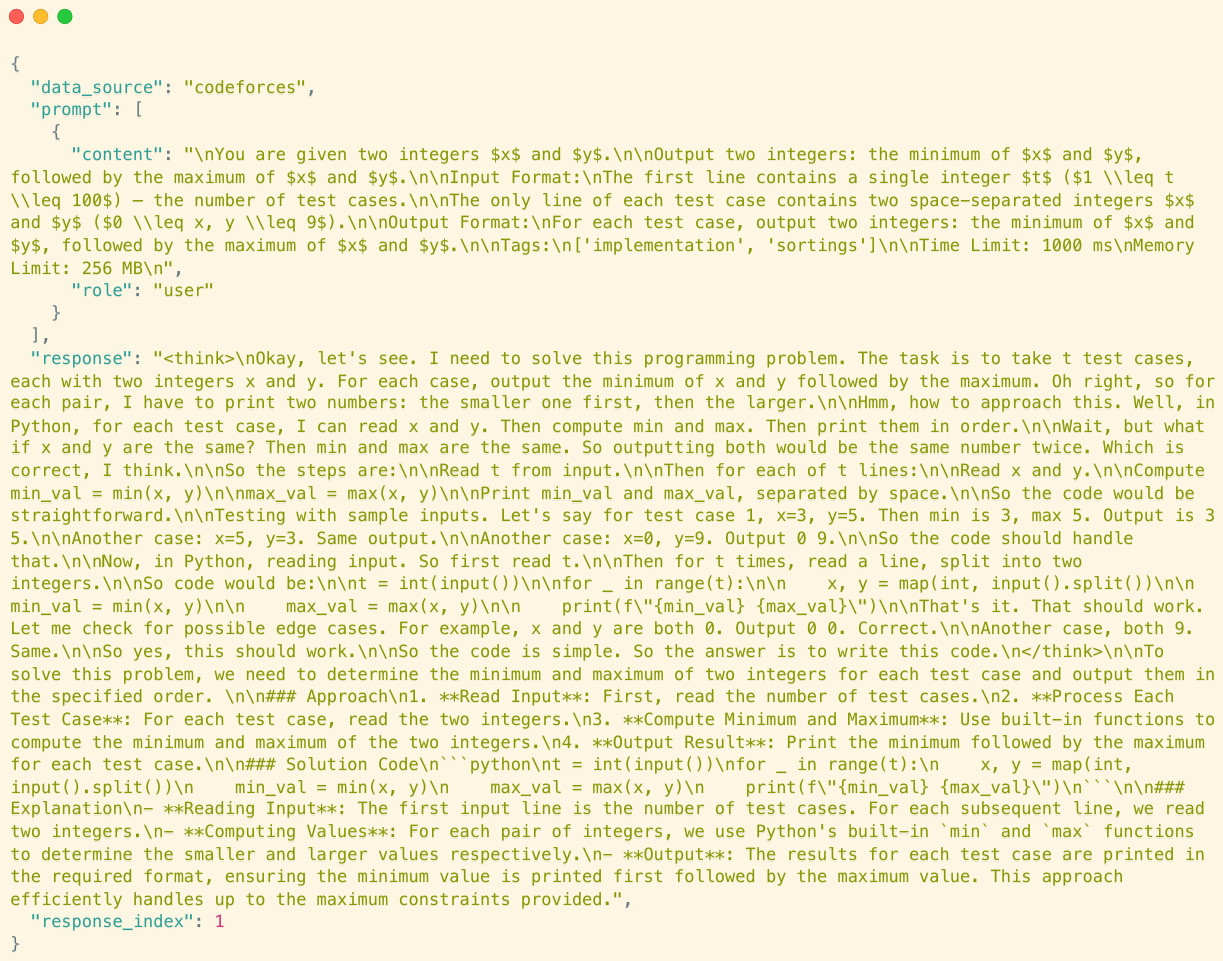}
  \caption{Response generated by the ResRL-trained Qwen3-4B model (Rollout 1, think mode) on the Codeforces test set.}
  \label{fig:case11}
\end{figure}
\clearpage
\begin{figure}[H]
  \centering
  \captionsetup{skip=2pt} 
  \includegraphics[width=\textwidth]{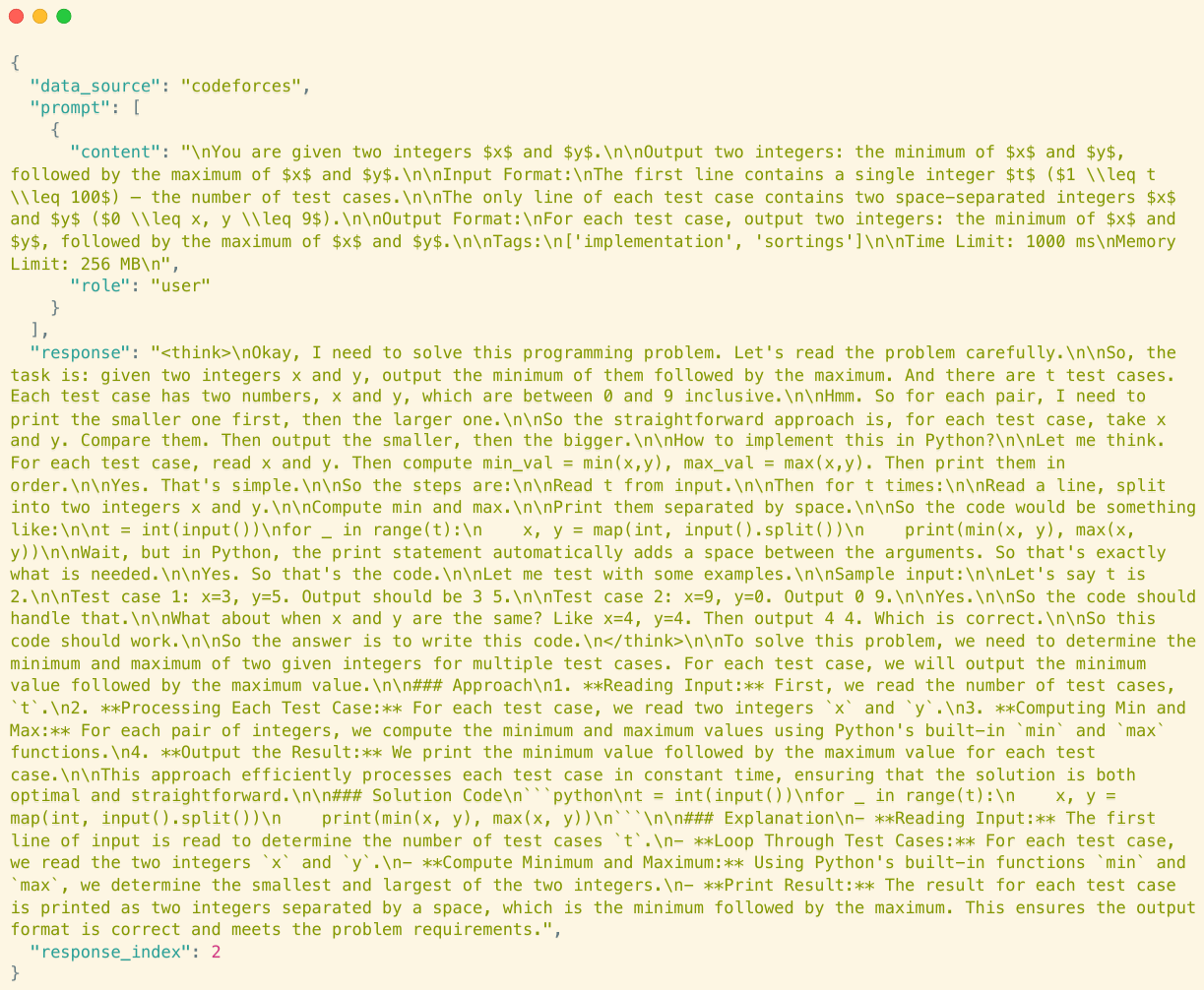}
  \caption{Response generated by the ResRL-trained Qwen3-4B model (Rollout 2, think mode) on the Codeforces test set.}
  \label{fig:case12}
\end{figure}
\clearpage
\end{document}